
\documentclass[letterpaper, 10 pt, conference]{ieeeconf}  
\pdfoutput=1

\IEEEoverridecommandlockouts                              

\overrideIEEEmargins                                      



\usepackage{times} 

\usepackage[ruled,linesnumbered]{algorithm2e}
\usepackage{algpseudocode}

\usepackage{amsmath} 
\usepackage{amssymb}  
\usepackage{color}
\usepackage{graphicx}
\usepackage{subfigure}
\usepackage[english]{babel}
\usepackage{amsthm}
\usepackage{breqn}
\usepackage[super]{nth}
\usepackage{url}
\usepackage{siunitx}
\usepackage{makecell}
\usepackage{microtype}
\usepackage{booktabs}
\usepackage{hyperref}
\usepackage{stfloats}
\usepackage{threeparttable} 
\usepackage{multirow}
\theoremstyle{definition}
\newtheorem{proposition}{Proposition}
\usepackage{verbatim}

\usepackage[utf8]{inputenc}
\usepackage{graphicx}
\usepackage{subfigure}
\usepackage{float}
\usepackage{bm}
\usepackage{amssymb}
\usepackage{threeparttable}
\usepackage{multirow}
\usepackage{eqnarray}
\usepackage{color}
\usepackage{graphicx}
\usepackage{tablefootnote}

\makeatletter
\newcommand{\removelatexerror}{\let\@latex@error\@gobble}
\makeatother

\SetKwInOut{KwParameter}{Parameters}

\title{\LARGE \bf
Contact-Aware Non-prehensile Robotic Manipulation for Object Retrieval in Cluttered Environments
}

\author{Yongpeng Jiang$^{*}$,
Yongyi Jia$^{*}$, and
Xiang Li
%
%
\thanks{$^{*}$ Equal contribution}
\thanks{
%
Y. Jiang, Y. Jia, and X. Li are with the Department of Automation, Tsinghua University, China. This work was supported in part by the National Natural Science Foundation of China under Grant U21A20517 and 52075290, and in part by the Science and Technology Innovation 2030-Key Project under Grant 2021ZD0201404. 
Corresponding author: Xiang Li (xiangli@tsinghua.edu.cn)
}}

\begin{document}

\maketitle
\thispagestyle{plain}
\pagestyle{plain}

\newtheorem{definition}{Definition}

\begin{abstract}

Non-prehensile manipulation methods usually use a simple end effector, e.g., a single rod, to manipulate the object. Compared to the grasping method, such an end effector is compact and flexible, and hence it can perform tasks in a constrained workspace; As a trade-off, it has relatively few degrees of freedom (DoFs), resulting in an under-actuation problem with complex constraints for planning and control. 
%
%
This paper proposes a new non-prehensile manipulation method for the task of object retrieval in cluttered environments, using a rod-like pusher.
%
%
Specifically, a candidate trajectory in a cluttered environment is first generated
with an improved Rapidly-Exploring Random Tree (RRT) planner; Then, a Model Predictive Control (MPC) scheme is applied to stabilize the slider's poses through necessary contact with obstacles.
Different from existing methods, the proposed approach is with the contact-aware feature, which enables the synthesized effect of active removal of obstacles, avoidance behavior, and switching contact face for improved dexterity. Hence both the feasibility and efficiency of the task are greatly promoted.
%
%
%
%
The performance of the proposed method is validated in a planar object retrieval task, where the target object, surrounded by many fixed or movable obstacles, is manipulated and isolated.
Both simulation and experimental results are presented.
%
%

\end{abstract}

\section{INTRODUCTION}
Manipulation in clutter is a skill commonly demanded in daily life and production, such as desktop arrangement 
and tidying up open shelves.
%
Such a task is challenging for a robot manipulator because 
the dexterity of the robot end effector is often restricted by the cluttered environment and unknown object properties. For example, the stable grasp pose might be occluded by surrounding obstacles;
%
{Another example is that fragile or heavy objects are generally dangerous to lift up.}
Non-prehensile manipulation proposed by Mason \cite{Mason1999ProgressIN} only requires no penetration constraints and does not rely on stable grasping \cite{Siciliano2018NonprehensileD}, which is suitable for performing tasks in cluttered environments.
%
%
%

%
%

This paper considers a representative and illustrative scenario in the problem of non-prehensile manipulation, that is, retrieving a target object
from clutter with a single rod-like pusher overhead, as seen in Fig. \ref{fig: teaser}. To achieve it, the pusher should contact and move the object (i.e., the planar slider) to the goal location in the presence of multiple obstacles. Such a task is not trivial, and the challenges can be summarized from the following aspects.
\begin{enumerate}
    \item[-] The pusher and slider correlated by frictional contacts form an underactuated system with hybrid dynamics (i.e., alterable contact faces and modes), 
    thus imposing complex kinodynamic constraints on planning and control.
    \item[-] Push planning in cluttered environments is limited by the widely known narrow corridor problem, which seriously restricts solving efficiency.
\end{enumerate}

To address the problems above, existing methods 
add extra constraints to reduce the search space, such as demanding the contact mode to be consistent \cite{Doshi2019HybridDD} or limiting the slider's movement to a particular pattern (i.e., Dubins path) \cite{Zhou2019PushingR}.
However, such methods fail to sufficiently explore the state space, which might affect the solution quality. 
%
Besides, most existing works consider avoidance of simple obstacles \cite{Moura2021NonprehensilePM} or implicitly assume an open space is required \cite{Doshi2019HybridDD, Chai2022ObjectRT}.
However, in cluttered environments, there might be no feasible path to the goal position if the manipulated object merely avoids obstacles, or the total efficiency is unacceptable as it might take a long time to complete all the avoidance. 
\begin{figure}[t]
    \centering
    \vspace{10pt}
    \includegraphics[width=.98\columnwidth]{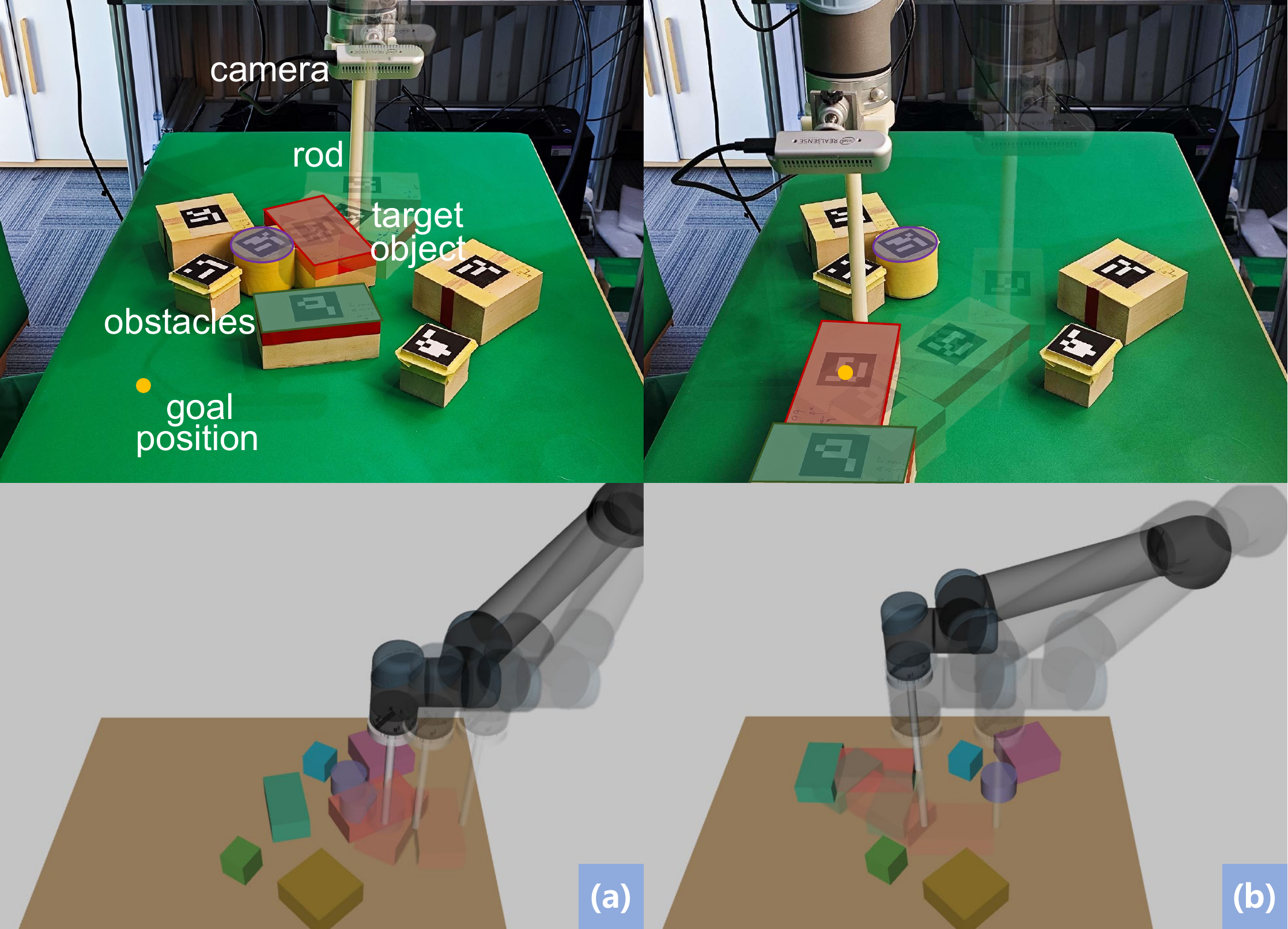}
    \caption{
        \textbf{Object retrieval task through planar pushing.}
        The target object in \textbf{\textcolor[RGB]{242,121,112}{red}} is separated from the clutter with a rod-like pusher through pushing manipulation. 
        \textbf{Top}: camera view. 
        \textbf{Bottom}: 3D visualization.
        \textbf{Left}: The \textbf{\textcolor[RGB]{137,131,191}{purple}} cylinder is pushed aside and thus the target object quickly navigates through fixed obstacles on both sides.
        \textbf{Right}: the \textbf{\textcolor[RGB]{50,184,151}{cyan}} cube is pushed away and thus the target object bursts through the clutter.
    }
    \label{fig: teaser}
    \vspace{-15pt}
\end{figure}
%


%
To improve the feasibility and efficiency of object retrieval in cluttered environments, this paper proposes a contact-aware non-prehensile manipulation method using the rod-like pusher, which integrates multiple actions of active removal of obstacles, avoidance behavior, and switching contact face to
create a feasible path to the goal position if it is not available at the beginning. 
%
%
The proposed method is organized as follows.
\begin{itemize}
    \item[-] For global motion planning, 
    an RRT planner guided by reachable sets is proposed to generate candidate trajectories for the target object and enables flexible choice of contact faces and modes for improved dexterity.
    \item[-] For local motion planning, an interaction model with necessary simplifications is applied to predict the outcome of the contacts between objects and then generate the safe and reliable active pushing action.
    \item[-] For motion and interaction control, 
    an MPC scheme stabilizes the object's pose around the candidate trajectory while navigating towards the goal, even during contact with obstacles. 
\end{itemize}

Such a contact-aware feature allows the robot to explore different actions to generate more opportunities in cluttered environments.
%
%
{Moreover, the simplification of pushing dynamics yields reachable sets and object interaction model, which efficiently guide motion planning.}
Simulation results and further robot experiments are presented to verify the effectiveness of the proposed method. 
\section{RELATED WORKS}
\subsection{Non-prehensile Pushing Manipulation}
%

Planar pushing \cite{Mason1982ManipulatorG} has recently become a representative task for developing non-prehensile manipulation algorithms. Modeling the pushing task needs to deal with the full relation between slide motion and frictional load.
Several contributions in this field are summarized below, which have made the task increasingly tractable. Mason made the widely used quasi-static assumption \cite{Mason1982ManipulatorG};
Goyal \textit{et al.} \cite{Goyal1991PlanarS} proposed the limit surface force-motion model of the sliding system;
Cutkosky \textit{et al.} \cite{Cutkosky1991FixureP} simplified the model by an ellipsoidal approximation; Zhou \textit{et al.} \cite{Zhou2019PushingR} validated the system's differential flatness properties. 
%


In parallel, several works have also been proposed to deal with the control of discontinuous pushing dynamic caused by mutable contact modes (i.e. sticking, sliding).
Inspired by recent developments in contact-rich locomotion, frictional contacts could be efficiently handled by complementary constraints \cite{Zhou-RSS-17}. Moura \textit{et al.} \cite{Moura2021NonprehensilePM} applied the technique and built an MPCC-based framework for robust planning and control. Wang \textit{et al.} \cite{Wang2022ContactImplicitPA} used an alternative approach of State-Triggered Constraints (STC) and achieved better trajectory tracking performance. 

Most of the existing methods assume that the contact between the pusher and the slider is rigid and consistent
\cite{Xue2022DemonstrationGOC}; 
While switching the contact positions makes it flexible to manipulate the object, the planning of such a formulation becomes more challenging as it is involved both the discrete contact and the continuous trajectory; 
To handle contact switches, Doshi \textit{et al.} \cite{Doshi2019HybridDD} proposed an exhaustive tree search method, Xue \textit{et al.} \cite{Xue2022DemonstrationGOC} used the human demonstration to guide the planning. 
However, these methods are limited either in the number of contact switches or in generalization ability, which does not function in densely cluttered environments, especially when the pushing trajectory is infeasible at the beginning if obstacles are not moved.
%

%
%
%

\subsection{Manipulation Planning in Cluttered Environments}
%
%
%
%
%
Existing works apply hierarchical planning, exhaustive tree search, and sampling-based approaches for manipulation planning in cluttered environments. 

First, hierarchical approaches are introduced to break the task down into several steps; Gao \textit{et al.} \cite{Gao2021FastHT} and Nam \textit{et al.} \cite{Nam2021FastAR} solved the object rearrangement task by solving combinatorial optimization at high level and motion planning at low level; These partitions designed for prehensile manipulation may not be suitable for complex kinodynamic constraints in pushing. Second, %
tree search can handle prehensile and non-prehensile cases \cite{Doshi2019HybridDD, Song2019MultiObjectRW, Chen2021TrajectoTreeTO}, but it is subject to the combination explosion and relies on task-specific heuristics for pruning. Third, 
sampling-based techniques are less sensitive to 
the branching factor and enable 
efficiently exploring high-dimensional search space and handling holonomic constraints \cite{Cheng2020ContactMG};
However, such methods commonly need expensive tree extension processes for non-holonomic dynamics \cite{Webb2013KinodynamicRA}, for instance, the pushing system (which is due to solving non-trivial two-point Boundary Value Problems (BVPs) when connecting arbitrary states \cite{Xie2015TowardAO}); %
To tackle the problem, Webb \textit{et al.} \cite{Webb2013KinodynamicRA} proposed Kinodynamic RRT*, which uses an optimal controller for state connection; 
For general nonlinear systems, Goretkin \textit{et al.} \cite{Goretkin2013OptimalSP} applied local approximation on dynamic constraints and cost function.

Different from the above approaches which
assume continuous dynamics, the pushing system includes hybrid dynamics. Our work is inspired by utilizing reachable sets to guide the tree expansion \cite{Shkolnik2009ReachabilityguidedSF,Wu2020R3TRR} and further extends the approach to
pushing manipulation.

In summary, existing methods for manipulation planning in cluttered environments are commonly limited by several open issues: hybrid dynamics
%
due to contact face switching and contact mode scheduling \cite{Hogan2016FeedbackCO}, pushing manipulation with fewer DoFs \cite{Wei2017ManipulatorMP}, obstacles with irregular shapes \cite{Moura2021NonprehensilePM,Chai2020AdaptiveUO}, and heavy physics engines and large datasets \cite{Wei2017ManipulatorMP}. Those issues will be systematically addressed in this paper.

\section{METHOD}
In this section, we formally define the planar object retrieval task and propose the contact-aware planning algorithm in Sec.~\ref{sec: problem formulation}.
Implementation of the contact-aware feature requires the computation of reachable sets for guided search (Sec.~\ref{sec: computation of reachable sets}), the object interaction module for motion prediction (Sec.~\ref{sec: object interaction model}), and a robust feedback controller for trajectory tracking (Sec.~\ref{sec: model predictive pushing controller}).
The block diagram of the proposed framework is depicted in Fig. \ref{fig: algorithm framework}.

\begin{figure*}[tbp]
    \centering 
    \subfigcapskip=0pt
    \includegraphics[height=1.95in]{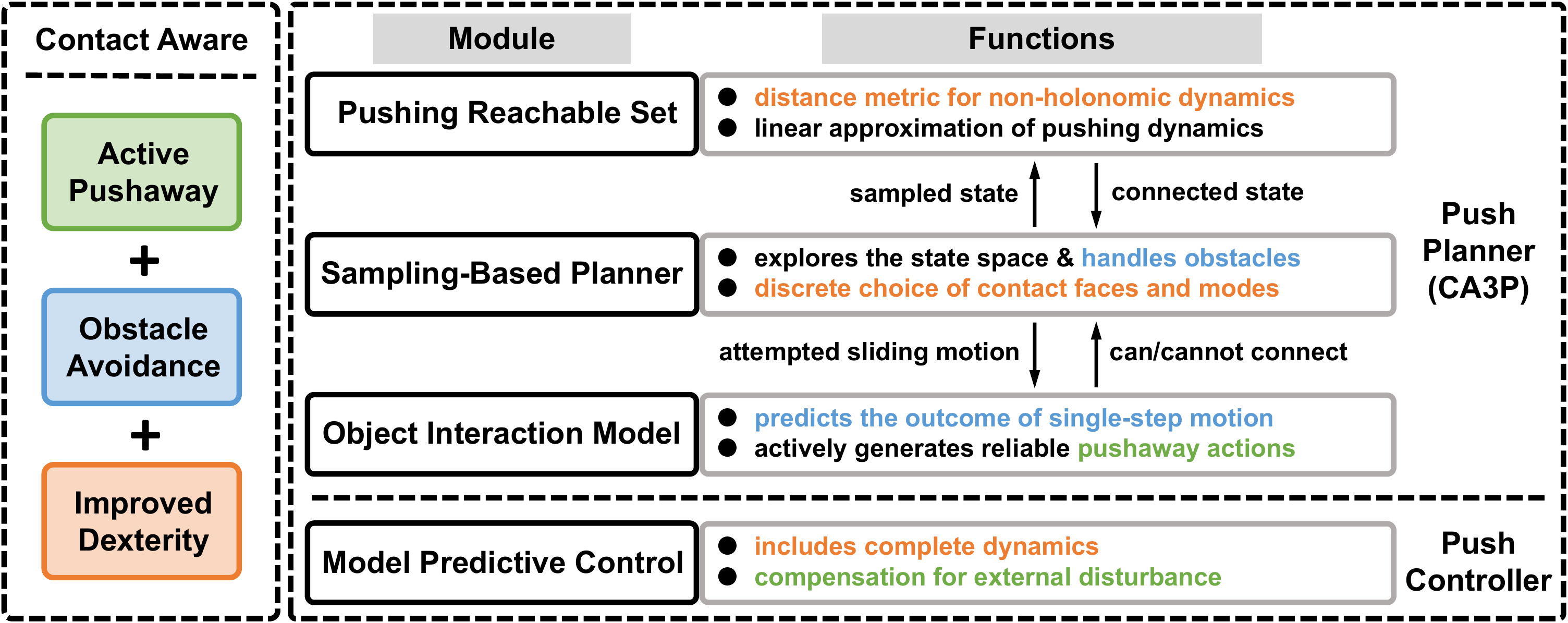}
    \vspace{-5pt}
    \caption{
        Block diagram of the proposed framework (the planner and controller). Elements and implementations of the contact-aware feature are highlighted.
    }
    \label{fig: algorithm framework}
    \vspace{-10pt}
\end{figure*}

\subsection{Pushing Task and Pushing Planner} \label{sec: problem formulation}

\subsubsection{Problem Formulation} \label{sec: retrieval problem formulation}
This paper considers a planar workspace 
with the target object (also referred to as the planar slider) $o^s$ and multiple fixed or movable obstacles denoted as $o_1^t,\dots,o_{\vert \mathcal{O}^t \vert}^t\in\mathcal{O}^t$ and $o_1^m,\dots,o_{\vert \mathcal{O}^m \vert}^m\in\mathcal{O}^m$, respectively. We assume the target object and obstacles are convex polygons with known geometric parameters and their initial poses 
are also given. The target object is actuated by a single rod-like pusher $p$ through frictional point contact, with friction coefficient $\mu_p$. A graphical representation of the task is shown in Fig. ~\ref{fig: retrieval task definition}. Note that the retrieval task in most of the existing works is to solve the control sequence which drives the target object from initial pose $\bm{x}^s[0]$ to goal region $\mathcal{X}_g^s \subset \mathcal{X}^s$, without collision with all obstacles $\mathcal{O}^t \cup \mathcal{O}^m$. 

This paper additionally considers actively utilizing contacts with the environment to create or amplify the pushing path. Hence, we consider solving the task in the joint state space $\mathcal{X}^E = \mathcal{X}^s \times \mathcal{X}_1^m \times \cdots \times \mathcal{X}_{\vert \mathcal{O}^m \vert}^m$, where $\times$ denotes the Cartesian product. A state $\bm{x}^E=(\bm{x}^s, \bm{x}_1^m,\dots,\bm{x}_{\vert \mathcal{O}^m \vert}^m)$ is called feasible if the fixed obstacles are not in contact with other objects (i.e., in case of turning over objects or getting stuck). The state transition is regulated by pushing dynamics (\ref{sec: pushing dynamics model}) and the object interaction model (\ref{sec: object interaction model}). 

Hence, the objective of this paper is to find
the feasible state sequence $\bm{x}^s{[0:T]}$ and corresponding control sequence $\bm{u}^p{[0:T-1]}$ which drives the target object to goal region, $T$ is the path length. Note that the joint states $\bm{x}^E{[0:T]}$ which contain $\bm{x}^s{[0:T]}$ should all be feasible.
\begin{figure}[t]
    \centering
    \includegraphics[width=.95\columnwidth]{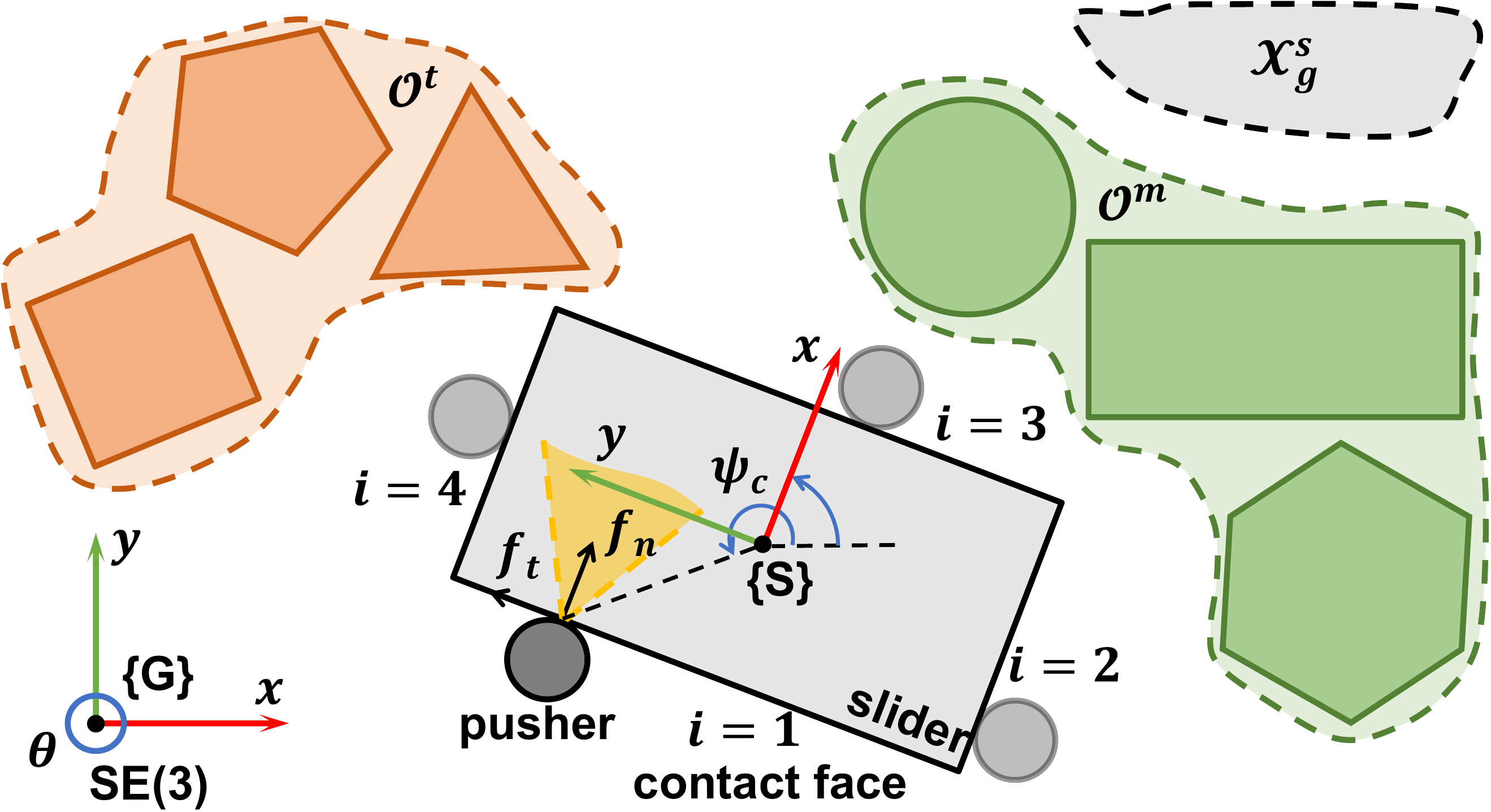}
    \caption{
        Graphical representation of the planar object retrieval task, including the target object (planar slider) painted in grey, the goal region $\mathcal{X}_g^s$, movable obstacles $\mathcal{O}^m$ and fixed obstacles $\mathcal{O}^t$, $i\in\left\{1,2,3,4\right\}$ denotes the discrete choice of contact face. Yellow shade represents the friction cone constraints.
    }
    \label{fig: retrieval task definition}
    \vspace{-20pt}
\end{figure}

\subsubsection{Contact-Aware Pushing Planner}\label{sec: contact-aware pushing planner}

The \textbf{C}ontact-\textbf{A}ware \textbf{P}lanar \textbf{P}ush \textbf{P}lanner (CA3P) is outlined in Algorithm \ref{alg: contact-aware planar push planar}. An illustration of the CA3P is given in Fig. ~\ref{fig: CA3P and reachable set} \textbf{Left}. Above all, the contact-aware feature could be divided into \textbf{three levels}. Specifically, the active avoidance and clearance of obstacles, and the contact face-switching technique to increase dexterity. The proposed algorithm is derived from the RRT planner. The search tree of RRT is stored in $\mathcal{H}^s$, each node 
in $\mathcal{H}^s$ records a feasible pose of the target object and the corresponding control input. 
To push aside movable obstacles to widen the pushing path and avoid other obstacles, we simultaneously track the changeable poses of $\mathcal{O}^m$ and the poses of $\mathcal{O}^t$ in $\mathcal{H}^o$, which is referred to as the \textbf{planning scene}. Note that $\mathcal{H}^s$ and $\mathcal{H}^o$ are identical in structure, which can be seen in Fig. \ref{fig: CA3P and reachable set} \textbf{Left}.

Since the pushing system is subject to non-holonomic constraints, the Euclidean distance works poorly; we adopt the concept of reachable sets \cite{Wu2020R3TRR} as a distance metric. In Algorithm \ref{alg: contact-aware planar push planar},
${\mathtt{NearestNeighbor}}$ computes the nearest state $\bm{x}_{\text{near}}$ in all reachable sets of states in $\mathcal{H}^s$ to a newly sampled state $\bm{x}_{\text{new}}$ (Line 5). 
We omit the superscript $s$ of the target object for brevity.
Then, ${\mathtt{GetGenerateState}}$ returns the \textbf{generating state} $\bm{x}_{\text{gen}}$ of the reachable set
which contains $\bm{x}_{\text{near}}$ (Line 6). The notations and computation of reachable sets will be presented in Sec.~\ref{sec: computation of reachable sets}. Next, ${\mathtt{Connect}}$ calculates the control input $\bm{u}$ driving the system from $\bm{x}_{\text{gen}}$ to $\bm{x}_{\text{near}}$ (Line 7). Due to the linearization error of pushing dynamics, the rollout of $\bm{u}$ usually does not reach $\bm{x}_{\text{near}}$ exactly; We denote the actual terminal state as $\bm{x}_{\text{term}}$. The state connection is realized by applying the discrete-time Linear-Quadratic Regulator (LQR) controller to the linear system:
\begin{equation}
    \bm{x}[k+1]={I}_4\bm{x}[k]+(\tau_{\text{LQR}} \bm{B})\bm{u}[k]
    \label{eq: state connection LQR system}
\end{equation}
with step size $\tau_{\text{LQR}}$, state cost $C_Q$ and input cost $C_R$. The input matrix $\bm{B}$ will be further defined in Sec.~\ref{sec: computation of reachable sets}, the subscript $i$ is also ignored for brevity.

The core procedure that enables the active clearance of obstacles is ${\mathtt{Simulate}}$ (Line 9), which predicts the outcome (i.e., new obstacle poses) of one-step target object motion. ${\mathtt{GetEnviron}}$ first queries the \textbf{planning scene} $\mathcal{O}_{\text{gen}}^m$ correlated with $\bm{x}_{\text{gen}}$ in $\mathcal{H}^o$ (Line 8). Next, the configuration of the movable obstacles 
is updated through the object interaction model in Sec.~\ref{sec: object interaction model}. The state connection is abandoned if collision with fixed obstacles is detected. If the state connection is successful, the search tree is updated with new \textbf{planning scene} ${\mathtt{Update}}(\mathcal{O}_{\text{gen}}^m)$.

The abovementioned procedure is repeated until the maximum number of nodes is exceeded. Finally, the control sequence $\bm{u}{[0:T-1]}$ is extracted from $\mathcal{H}^s$ if the goal region $\mathcal{X}_g$ is reachable from current states (Line 15).

%
\begin{figure}[!t]
    \removelatexerror
    \begin{algorithm}[H]
        \label{alg: contact-aware planar push planar}
        \caption{CA3P}
        \LinesNumbered
        \KwIn{State space $\mathcal{X}^s
        $, obstacles $\mathcal{O}^t,\mathcal{O}^m$, initial pose $\bm{x}[0]$, goal region $\mathcal{X}_g$}
        \KwOut{Control sequence $\bm{u}{[0:T-1]}$}
        \KwParameter{Reachable set step size $\tau$, LQR step size $\tau_{\text{LQR}}$, maximum number of nodes $n_{\text{max}}$}
        \While{\rm{\textbf{not}} maximum number of nodes exceeded}{
            Initialize search tree $\langle \mathcal{H}^s,\mathcal{H}^o \rangle$;

            $\mathcal{H}^s\text{.nodes}{\rm{.add}}\left((\bm{x}[0],\varnothing)\right), \mathcal{H}^o\text{.nodes}{\rm{.add}}\left(\mathcal{O}^t\bigcup\mathcal{O}^m\right)$;
            
            Randomly sample $\bm{x}_{\text{new}} \in \mathcal{X}^s$;
            

            $\bm{x}_{\text{near}}\hspace{-1.0mm}\leftarrow\hspace{-1.0mm}{\mathtt {NearestNeighbor}}\hspace{-1.0mm}\left(\hspace{-1.0mm}\bm{x}_{\text{new}},\hspace{-1.0mm}\bigcup\limits_{\bm{x} \in \mathcal{H}^s\text{.nodes}}\hspace{-1.0mm}\mathcal{R}_{\tau}(\bm{x})\hspace{-1.0mm}\right)$;

            $\bm{x}_{\text{gen}} \leftarrow {\mathtt{GetGenerateState}}(\bm{x}_{\text{near}})$;


            $\bm{x}_{\text{term}},\bm{u}{[0:\tau]} \leftarrow {\mathtt{Connect}}\left(\bm{x}_{\text{gen}},\bm{x}_{\text{near}}\right)$;

            $\mathcal{O}_{\text{gen}}^t\bigcup\mathcal{O}_{\text{gen}}^m \leftarrow {\mathtt{GetEnviron}}\left(\mathcal{H}^o\text{.nodes},\bm{x}_{\text{gen}}\right)$;
            
            \If{${\mathtt{Simulate}}(\bm{x}_{\text{\rm{gen}}},\bm{u}{[0:\tau]},\mathcal{O}_{\text{\rm{gen}}}^m,\mathcal{O}_{\text{\rm{gen}}}^t)$}{
                $\mathcal{H}^s\text{.nodes}{\rm{.add}}\left((\bm{x}_{\text{term}},\bm{u}{[0:\tau]})\right)$;

                $\mathcal{H}^s{\text{.edges}}{\rm{.add}}\left((\bm{x}_{\text{gen}},\bm{x}_{\text{term}})\right)$;

                $\mathcal{H}^o{\text{.nodes}}{\rm{.add}}\left(\mathcal{O}_{\text{gen}}^t \bigcup {\mathtt{Update}}(\mathcal{O}_{\text{gen}}^m)\right)$;

                $\mathcal{H}^o{\text{.edges}}{\rm{.add}}\left((\bm{x}_{\text{gen}},\bm{x}_{\text{term}})\right)$;
                
                \If{${\mathtt{Connect}}(\bm{x}_{\text{\rm{term}}},\mathcal{X}_g)$}{
                    \Return{${\mathtt{ExtractPath}}\left(\mathcal{X}_g,\mathcal{H}^s\right)$};
                }
            }
        }
    \Return{$\varnothing$};
    \end{algorithm}
    \vspace{-18pt}
\end{figure}
%

%

\vspace{-5pt}
\subsection{Pushing Reachable Sets} \label{sec: computation of reachable sets}

In this subsection, we first introduce the dynamic model and constraints of planar pushing and then present the computation of reachable sets, including an approximation to the dynamics.

\vspace{5pt}
\subsubsection{Pushing Dynamics} \label{sec: pushing dynamics model}
As seen in (\ref{fig: retrieval task definition}), state variables of the dynamic model can be chosen as the configuration of the planar slider and pusher
\begin{equation}
    \bm{x}^s\triangleq\left[x^s,y^s,\theta^s,\psi_c\right]^T
    \label{eq: pushing state variables},
\end{equation}
where $x^s,y^s, \theta^s$ denote the position and the orientation of the slider (i.e., target object) with respect to the global frame, and $\psi_c\in\left[-\pi,\pi\right]$ is the azimuth angle of the pusher's contact point on the slider's periphery. 

Input variables include the pusher's contact force and velocity
\begin{equation}
    \bm{u}^p\triangleq\left[f_n,f_t,\dot{\psi_c}\right]^T
    \label{eq: pushing input variables},
\end{equation}
where $f_n,f_t$ represent the normal and tangential forces at the slider's local coordinates. The concept of limit surface proposed by Goyal \textit{et al.} \cite{Goyal1991PlanarS} maps the frictional wrench $\mathcal{F}^s$ to the slider's twist $\mathcal{V}^s$: $\mathcal{V}^s=\bm{A}\mathcal{F}^s$, where $\bm{A}$ is a positive definite matrix defined by the maximum frictional wrench that can be exerted on the slider. 

Under quasi-static assumptions, the system dynamics model can now be given as
\begin{equation}
    \dot{\bm x}^s=\bm{f}_i\left(\bm {x}^s,\bm {u}^p\right)\triangleq\left[\begin{array}{cc}
        \bm R \bm A \bm J_{c,i}^\top & 0 \\
        0 & 1
    \end{array}\right]\bm{u}^p
    \label{eq: pushing dynamics},
\end{equation}
where $i=1,\dots,N$, $N$ is the number of contact faces, $\bm{R}$ is the rotation matrix from the slider's local coordinates to global coordinates, and $\bm{J}_{c,i}$ is the contact jacobian. The subscript $i$ is introduced to denote the discontinuous dynamics due to switching contact face, as illustrated in Fig. \ref{fig: retrieval task definition}. 

In practice, the state variables are constrained in the workspace $\bm{x}^s \in \mathcal{X}^s$, and the input variables are subject to box constraints and Coulomb friction constraints. We define $\mathcal{U}_f:0\leq f_n \leq \bar{f}$, $\mathcal{U}_{\psi+}: \dot{\psi}_c>0$, $\mathcal{U}_{\psi-}:\dot{\psi}_c<0$ such that
\begin{equation}
    \mathcal{U}_{st}:\left\{
    \begin{aligned}
    & \mathcal{U}_f \\
    & \vert f_t \vert \leq \mu_p f_n \\
    & \dot{\psi_c}= 0
    \end{aligned}
    \right.,\,
    \mathcal{U}_{sl}:\left\{
    \begin{aligned}
    & \mathcal{U}_f, \mathcal{U}_{\psi-} \\
    & f_t = \mu_p f_n \\
    & -\bar{\dot{{\psi}_c}} \leq \dot{\psi_c}
    \end{aligned}
    \right.,
    \mathcal{U}_{sr}:\left\{
    \begin{aligned}
    & \mathcal{U}_f, \mathcal{U}_{\psi+} \\
    & f_t = -\mu_p f_n \\
    & \dot{\psi_c} \leq \bar{\dot{{\psi}_c}}
    \end{aligned}
    \right.\hspace{-0.05cm},
    \label{eq: pushing input constraints}
\end{equation}
where $\bar{f}, \bar{\dot{\psi}}_c$ are upper bounds of the pusher's contact force and velocity, $\mathcal{U}_{st}$, $\mathcal{U}_{sl}$ and $\mathcal{U}_{sr}$ represent the input constraints for sticking, sliding left and sliding right contact modes respectively, and $\mu_p$ is the friction coefficient between pusher and slider.

\begin{figure}[t]
    \centering
    \includegraphics[width=.95\columnwidth]{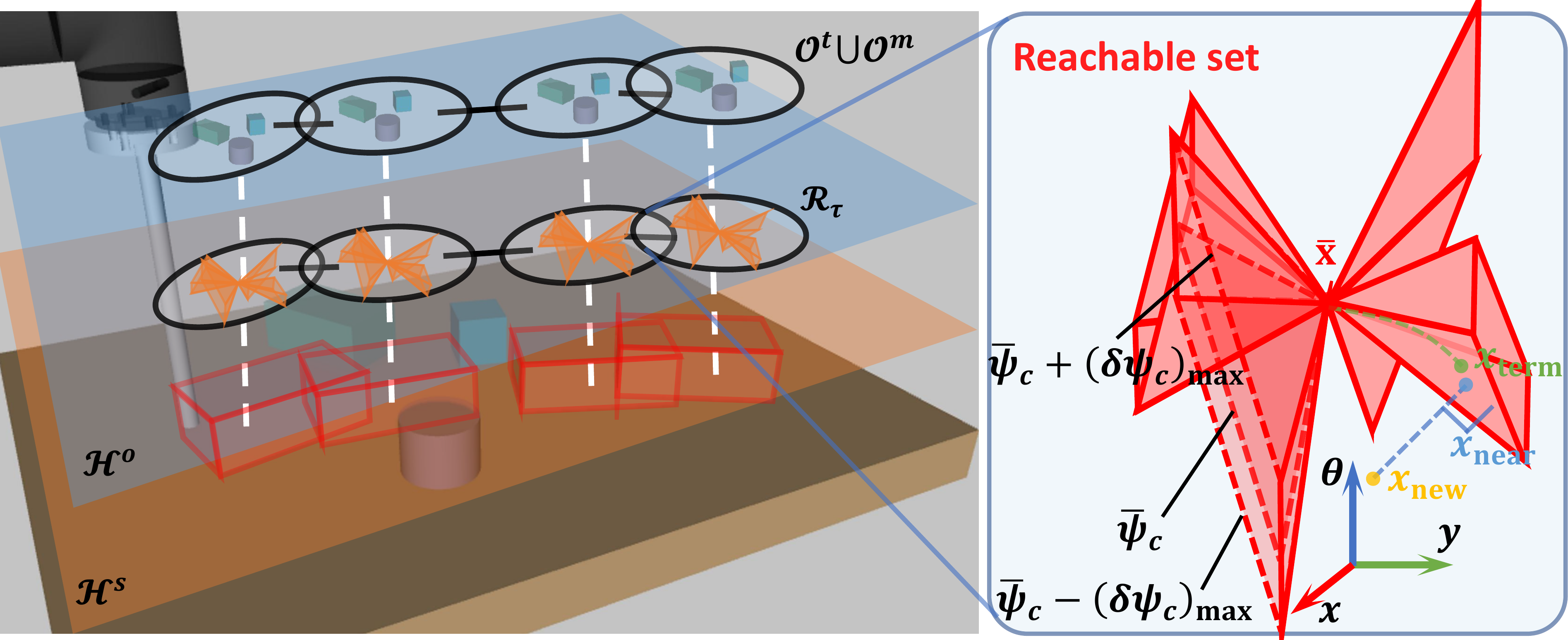}
    \caption{
        Illustration of the CA3P search trees and the reachable set. 
        \textbf{Left}: reachable sets $\mathcal{R}_{\tau}$ and planning scenes $\mathcal{O}^t\bigcup\mathcal{O}^m$ are stored in trees of the same structure, denoted as $\mathcal{H}^s$ and $\mathcal{H}^o$.
        \textbf{Right}: the reachable set is composed of several convex cones; each cone corresponds to pushing a certain contact face. Each \textbf{\textit{slice}} of the cone is with a fixed contact point location, e.g.,  ${\bar{\psi}}_c-(\delta\psi_c)_{max},\bar{\psi}_c,{\bar{\psi}}_c+(\delta\psi_c)_{max}$. Generating state $\bar{\bm{x}}$ of the reachable set, the sampled state $\bm{x}_{\text{new}}$, nearest neighbor $\bm{x}_{\text{near}}$, and terminal state $\bm{x}_{\text{term}}$ reached by state connection are shown.
    }
    \label{fig: CA3P and reachable set}
    \vspace{-18pt}
\end{figure}

\subsubsection{Computation of Reachable Sets} \label{sec: computation of reachable sets}
The reachable sets highlight the states more likely to be connected from the already explored state space. This technique can provide directional guidance to kinodynamic push planning. The reachable sets of arbitrary state $\bar{\bm{x}} \in \mathcal{X}$ is defined as the set of states reachable from $\bar{\bm{x}}$ within finite time horizon $\tau$, under the dynamic constraints and constraints on state and input variables:
\begin{equation}
    \begin{aligned}
        \mathcal{R}_{\tau}(\bar{\bm{x}})\triangleq
        \{
            \bm{x} \in \mathcal{X} \vert &\exists(x,u):[0,t]\mapsto(\mathcal{X},\mathcal{U}),t\in[0,\tau],
            \\
            &\hspace{-0.5cm} x(0)=\bar{\bm{x}},x(\tau)={\bm{x}},\dot{x}(\xi)=\bm{f}_i\left(x(\xi),u(\xi)\right)
        \}
    \end{aligned}
    \label{eq: reachable set definition}.
\end{equation}
We call $\bar{\bm{x}}$ the \textbf{generating state} of the \textbf{reachable set} $\mathcal{R}_{\tau}(\bar{\bm{x}})$. Using the time integration of (\ref{eq: pushing dynamics}), the terminal state is computed as
\begin{equation}
    x(t)=\bar{\bm{x}}+\int_{0}^{t}\bm{f}_i(x(\xi),u(\xi))d\xi
    \label{eq: terminal state computation}
\end{equation}
Note that due to nonlinear pushing dynamics (\ref{eq: pushing dynamics}), the analytic representation of (\ref{eq: terminal state computation}) is hard to obtain, and so is (\ref{eq: reachable set definition}).

Assuming $\tau$ is small, it is reasonable to make a linear approximation of (\ref{eq: pushing dynamics}) at 
state $\bar{\bm{x}}$ and input $\bar{\bm{u}}$ as
\begin{equation}
    \bm{f}_i({\bm{x}},{\bm{u}}) \approx \bm{f}_i(\bar{\bm{x}},\bar{\bm{u}})+\bm{A}_i({\bm{x}}-\bar{\bm{x}})+\bm{B}_i({\bm{u}}-\bar{\bm{u}})
    \label{eq: linearize dynamics},
\end{equation}
where $\bm{A}_i=\frac{\partial}{\partial {\bm{x}}}\bm{f}_i({\bm{x}},{\bm{u}})\Big|_{(\bar{\bm{x}},\bar{\bm{u}})}$, $\bm{B}_i=\frac{\partial}{\partial \bm{u}}\bm{f}_i(\bm{x},\bm{u})\Big|_{(\bar{\bm{x}},\bar{\bm{u}})}$, note that $\bm{A}_i=0$ for $\bar{\bm{u}}=0$.

In addition, we rewrite the input constraints in (\ref{eq: pushing input constraints}) as linear inequalities:
\begin{equation}
    \mathcal{U}_j=\left\{\bm{u}\vert\bm{D}_j{\bm{u}} \leq \bm{h}_j\right\}, j\in\{st,sl,sr\}
    \label{eq: pushing input constraint linear equality},
\end{equation}
and assume ${\bm{u}}$ is invariant over the considered time horizon $\tau$, $\bar{\bm{u}}=0$, thus an approximation to the \textbf{terminal states} (\ref{eq: terminal state computation}), corresponding to the $i^{\text{th}}$ contact face and the $j^{\text{th}}$ contact mode, under all possible inputs are formulated as a set:
\begin{equation}
    \begin{aligned}
        \mathcal{T}_{\tau}(\bar{\bm{x}})_{ij}=
        \left\{
            {\bm{x}}
            \vert
            {\bm{x}}=\bar{\bm{x}}+\tau
            \bm{B}_i{\bm{u}},
            \bm{u}\in\mathcal{U}_j
        \right\}
    \end{aligned}
    \label{eq: terminal state approximation}.
\end{equation}

Since we apply the linearized dynamics of (\ref{eq: pushing dynamics}), $x$, $y$ and $\theta$ are independent of $\dot\psi_c$, the change of contact location $\psi_c$ 
does not affect the $(x,y,\theta)$ dimensions of the reachable set, leading to an over-conservative approximation. To consider the influence of the pusher's movement on the slider's periphery, we consider all possible contact point locations during time horizon $\tau$, that is, $\psi_c \in 
\mathcal{P}=
[{\bar{\psi}}_c-(\delta\psi_c)_{max},{\bar{\psi}}_c+(\delta\psi_c)_{max}]$, where $(\delta\psi_c)_{max}=\tau
\bar{\dot{\psi}}_c$. Hence an expansion
of ($\ref{eq: terminal state approximation}$) is of the following form:
\begin{equation}
    \mathcal{AT}_{\tau}(\bar{\bm{x}})_{ij}=\bigcup_{\psi_c\in\mathcal{P}}
    \mathcal{T}_{\tau}\left(\left[\bar{x},\bar{y},\bar{\theta},\psi_c\right]\right)_{ij}
    \label{eq: augmented terminal state approximation}.
\end{equation}
\begin{proposition}
    \textit{$\forall \bar{\bm{x}}\in\mathcal{X},\bar{\bm{u}}=0,\tau \geq 0$, the expanded set of terminal states $\mathcal{AT}$ in (\ref{eq: augmented terminal state approximation}) is convex.}
\end{proposition}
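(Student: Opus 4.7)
The plan is to exhibit $\mathcal{AT}_\tau(\bar{\bm{x}})_{ij}$ as the image of a single convex set under an affine map, from which convexity follows immediately. Although (10) presents it as a union of convex sets indexed by $\psi_c \in \mathcal{P}$, I intend to absorb the union parameter into an enlarged domain so that no union operation remains.

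First, I will check that the constituent pieces are convex. By (7), $\mathcal{U}_j = \{\bm{u} : \bm{D}_j\bm{u} \leq \bm{h}_j\}$ is a polyhedron and hence convex for each $j \in \{st, sl, sr\}$. The interval $\mathcal{P} = [\bar{\psi}_c - (\delta\psi_c)_{\max},\, \bar{\psi}_c + (\delta\psi_c)_{\max}] \subset \mathbb{R}$ is obviously convex, and so is the Cartesian product $\mathcal{P} \times \mathcal{U}_j \subset \mathbb{R}^4$.

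Next, since the linearization in (8) is performed once at $(\bar{\bm{x}}, \bar{\bm{u}}=0)$, the matrix $\bm{B}_i$ is a constant that does not depend on the slice parameter $\psi_c$. Using the decomposition $[\bar{x}, \bar{y}, \bar{\theta}, \psi_c]^T = [\bar{x}, \bar{y}, \bar{\theta}, 0]^T + \bm{e}_4 \psi_c$ with $\bm{e}_4 = [0,0,0,1]^T$, I would rewrite
\[
\mathcal{AT}_\tau(\bar{\bm{x}})_{ij} = \{[\bar{x}, \bar{y}, \bar{\theta}, 0]^T + \bm{e}_4 \psi_c + \tau \bm{B}_i \bm{u} : \psi_c \in \mathcal{P},\, \bm{u} \in \mathcal{U}_j \}.
\]
The map $(\psi_c, \bm{u}) \mapsto [\bar{x}, \bar{y}, \bar{\theta}, 0]^T + \bm{e}_4 \psi_c + \tau \bm{B}_i \bm{u}$ is affine in its joint argument, so $\mathcal{AT}_\tau(\bar{\bm{x}})_{ij}$ is the affine image of the convex set $\mathcal{P} \times \mathcal{U}_j$, and is therefore convex.

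The main subtlety I expect is justifying that $\bm{B}_i$ may be treated as constant across the slices indexed by $\psi_c$. If one instead re-linearized the dynamics at each shifted base state $[\bar{x}, \bar{y}, \bar{\theta}, \psi_c]$, then $\bm{B}_i$ would inherit a $\psi_c$-dependence through the contact Jacobian $\bm{J}_{c,i}$, the combined map would become bilinear in $(\psi_c, \bm{u})$, and convexity would generally fail. Under the linearization convention of (8), however, $\bm{B}_i$ is evaluated once at the nominal point $\bar{\bm{x}}$ and reused for every slice, so the affine-image argument goes through cleanly and the proposition follows.
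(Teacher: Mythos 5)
There is a genuine gap: your argument hinges on treating $\bm{B}_i$ as one constant matrix shared by all slices $\psi_c\in\mathcal{P}$, but that reading contradicts both the construction and the paper's own proof. In (\ref{eq: augmented terminal state approximation}) each slice is $\mathcal{T}_{\tau}\left(\left[\bar{x},\bar{y},\bar{\theta},\psi_c\right]\right)_{ij}$, i.e.\ the approximation (\ref{eq: terminal state approximation}) is instantiated at the shifted base state, so $\bm{B}_i$ inherits a $\psi_c$-dependence through the contact Jacobian $\bm{J}_{c,i}$. That dependence is the entire point of the expansion: the text introduces (\ref{eq: augmented terminal state approximation}) precisely to capture the influence of the pusher's motion along the periphery on the $(x,y,\theta)$ dimensions, and Fig.~\ref{fig: CA3P and reachable set} describes each slice as corresponding to a fixed contact point location. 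Under your reading, varying $\psi_c$ merely translates one fixed cone along the $\psi_c$-axis, the $(x,y,\theta)$ projection never changes, and the proposition collapses to the trivial fact that an affine image of $\mathcal{P}\times\mathcal{U}_j$ is convex --- you prove a different, essentially empty statement, not the convexity of $\mathcal{AT}$ as intended.

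Moreover, your closing claim that convexity ``would generally fail'' under the slice-dependent linearization is exactly what the paper's proof refutes. The paper exploits the structure of (\ref{eq: pushing dynamics}) under the ellipsoidal limit-surface model: writing the relevant matrix as $\left[\bm{C}_1;\bm{C}_2(\psi_c)\right]$, only the row $\bm{C}_2$ (the $\theta$-component) depends, continuously, on $\psi_c$, while the rows generating $(x,y)$ do not depend on the contact location. Given two terminal points generated by $(\psi_{c,1},\bm{u}_1)$ and $(\psi_{c,2},\bm{u}_2)$, their convex combination automatically has $(x,y)$-part $\bm{C}_1\bm{u}(l)$ with $\bm{u}(l)=l\bm{u}_1+(1-l)\bm{u}_2\in\mathcal{U}_j$, and a continuity (intermediate-value) argument in the single scalar $\theta$-component yields a $\psi_c^{*}\in\mathcal{P}$ realizing the remaining coordinate. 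The bilinear coupling you flagged is thus confined to one scalar row over a one-dimensional interval, which is what rescues convexity; this is the only nontrivial content of the proposition, and your proposal removes it rather than proving it. To repair the proof you would need to adopt the slice-dependent $\bm{B}_i$ and supply this continuity step (or an equivalent argument) for the $\theta$-component.
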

\begin{proof}
    We prove the convexity of $\mathcal{AT}$ by showing the union of $\bm{B}_i\bm{u}$ with $\psi_c\in\mathcal{P}$ is convex.
    According to (\ref{eq: pushing dynamics}) and (\ref{eq: linearize dynamics}), such $\bm{B}_i\bm{u}$ can be expressed as the Cartesian product $\mathcal{S}_1\times\mathcal{S}_2$, where $(x,y,\theta)\in\mathcal{S}_1$ and $\psi_c\in\mathcal{S}_2$.
    From (\ref{eq: terminal state approximation}), we know $\mathcal{S}_2$ is an affine transformation of $\mathcal{U}$ and hence is convex.
    In addition, we denote $\left[\bm{R}\bm{A}\bm{J}_c^{\top};0\right]$ as $\left[\bm{C}_1;\bm{C}_2(\psi_c)\right]$, where $\bm{C}_1 \in \mathbb{R}^{2\times4}$ is constant matrix and $\bm{C}_2 \in \mathbb{R}^{1\times4}$ is a continuous function of $\psi_c$.
    Assume $l\in[0,1],\bm{x}_1,\bm{x}_2\in\mathcal{S}_1,\bm{x}(l)=l\bm{x}_1+(1-l)\bm{x}_2$, then $\exists \psi_{c,1},\psi_{c,2}\in\mathcal{P},\bm{u}_1,\bm{u}_2\in\mathcal{U}$, such that $\bm{x}_1=\left[\bm{C}_1;\bm{C}_2(\psi_{c,1})\right]\bm{u}_1,\bm{x}_2=\left[\bm{C}_1;\bm{C}_2(\psi_{c,2})\right]\bm{u}_2$, and their convex combination $\bm{x}(l)=\left[\bm{C}_1\bm{u}(l);l\bm{C}_2(\psi_{c,1})\bm{u}_1+(1-l)\bm{C}_2(\psi_{c,2})\bm{u}_2\right]$.
    Since $\bm{C}_2$ is continuous, $\bm{C}_2\bm{u}$ is monotonic, $\exists\psi_c^{*}\in\mathcal{P}$, such that $l\bm{C}_2(\psi_{c,1})\bm{u}_1+(1-l)\bm{C}_2(\psi_{c,2})\bm{u}_2=\bm{C}_2(\psi_c^{*})\bm{u}(l)$, where $\bm{u}(l)=l\bm{u}_1+(1-l)\bm{u}_2\in\mathcal{U}$.
    So far, the convexity of $\mathcal{S}_1$ and $\mathcal{AT}$ are proved.
\end{proof}

Based on (\ref{eq: augmented terminal state approximation}), we can obtain the analytic approximation of (\ref{eq: reachable set definition}) as:
\begin{equation}
    \mathcal{R}_{\tau}(\bar{\bm{x}})\approx
    \bigcup_{i=1,\dots,N
                \atop
             j\in\{st,sl,sr\}}
             \mathcal{AT}_{\tau}(\bar{\bm{x}})_{ij}
    \label{eq: reachable set approximation},
\end{equation}
an example of such a set is shown in Fig. ~\ref{fig: CA3P and reachable set} \textbf{Right}. 
The reachable set contains new slider poses to be reached from a certain pose with greater probability and naturally enables a discrete choice of the contact face and mode. Since it is still the union of multiple convex sets, computationally efficient methods can be derived for nearest neighbor search \cite{Wu2020R3TRR}.
\vspace{-5pt}
\subsection{Object Interaction Model} \label{sec: object interaction model}
\begin{figure}[t]
    \centering
    \includegraphics[width=.95\columnwidth]{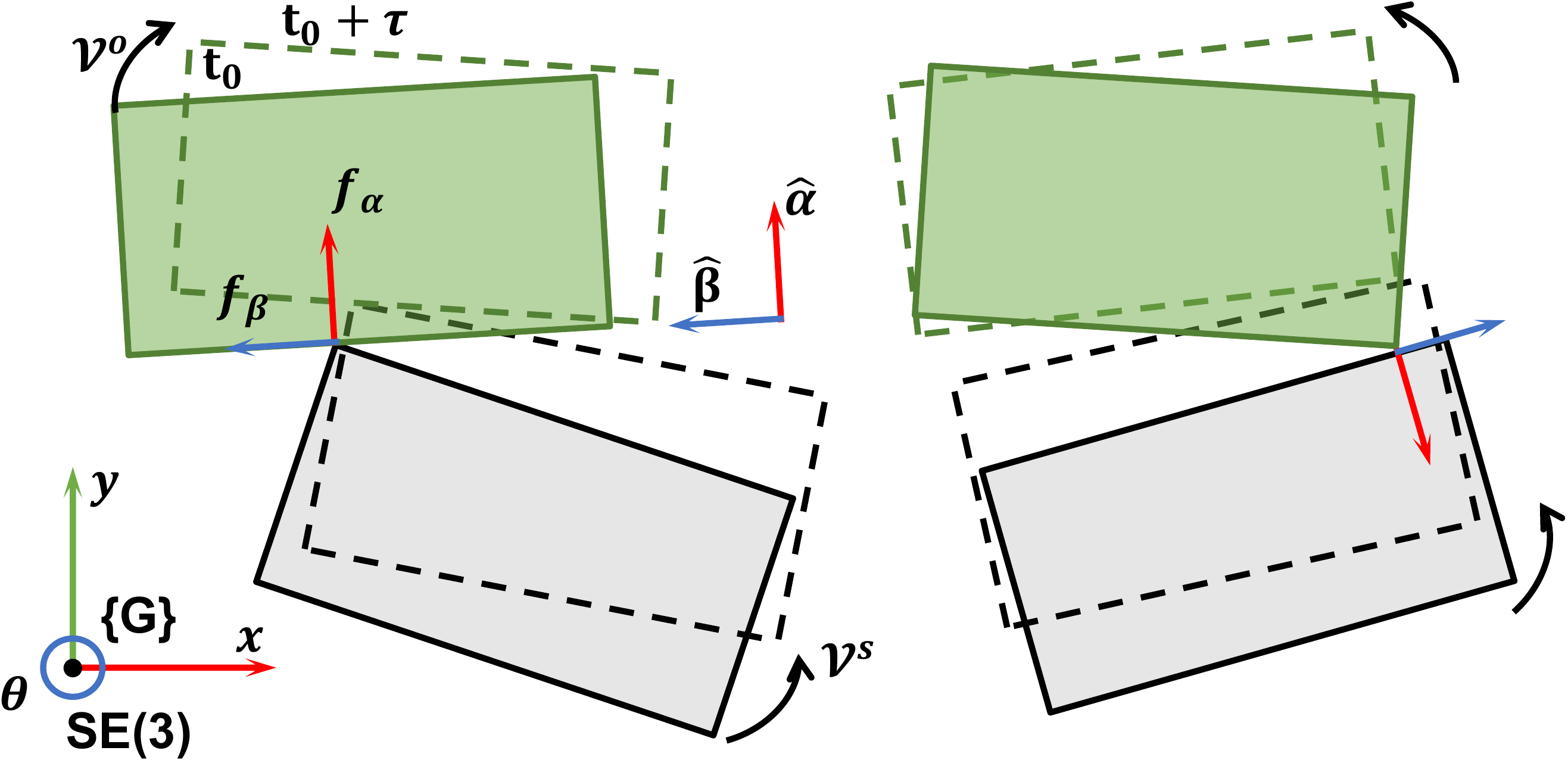}
    \caption{
        Two different configurations of object interaction. $\mathcal{V}^s,\mathcal{V}^s$ are instantaneous twists, $\hat{\bm{\alpha}},\hat{\bm{\beta}}$ are the contact normal and tangential, $f_{\alpha},f_{\beta}=f_{\beta+}-f_{\beta-}$ are contact forces.
        \textbf{Left}: $\hat{\bm{\alpha}}$ points inwards the obstacle (painted in green).
        \textbf{Right}: $\hat{\bm{\alpha}}$ points inwards the slider (painted in grey).
    }
    \label{fig: interaction model definition}
    \vspace{-18pt}
\end{figure}
The object interaction model is designed to forecast the motion of movable objects in contact with the planar slider. Fixed objects in Sec.~\ref{sec: retrieval problem formulation} refer to objects that fall over easily under unintentional contacts. It is dangerous to enable interaction with these objects in planning, even with the physics engine, because control error may lead to large divergence. Besides, relying on a physics engine for precise motion prediction is time-consuming and unnecessary.
We denote the contact normal and tangential as $\hat{\bm \alpha}$ and $\hat{\bm \beta}$. Without loss of generality, we assume $\hat{\bm \alpha}, \hat{\bm \beta}$ form a right-handed coordinate system, and $\hat{\bm \alpha}$ points inwards the obstacle, as seen in Fig. \ref{fig: interaction model definition} \textbf{Left}. The contact force is defined as $\bm{f}=\left[f_{\alpha},f_{\beta+},f_{\beta-}\right]^{\top} \succeq 0$. Given the twist of slider and obstacle in global coordinates $\mathcal{V}^s,\mathcal{V}^o$, and the contact jacobians $\bm{J}_c^s,\bm{J}_c^o$, the non-penetration constraints can be expressed as:
\begin{equation}
    \begin{aligned}
        0 \leq \hat{\bm \alpha}^{\top}
        \left(
            \bm{J}_c^o\mathcal{V}^o-\bm{J}_c^s\mathcal{V}^s
        \right) 
        & \perp 
        f_{\alpha} \geq 0 \\
        \bm{0} \preceq
        \begin{bmatrix}
        \hat{\bm \beta}^{\top} \\
        -\hat{\bm \beta}^{\top}
        \end{bmatrix}
        \left(
            \bm{J}_c^o\mathcal{V}^o-\bm{J}_c^s\mathcal{V}^s
        \right) + \lambda
        \begin{bmatrix}
        1 \\
        1
        \end{bmatrix}
        & \perp 
        \begin{bmatrix}
        f_{\beta+} \\
        f_{\beta-}
        \end{bmatrix} \succeq \bm{0} \\
        0 \leq
        \mu f_{\alpha}-f_{\beta+}-f_{\beta-}
        & \perp
        \lambda
        \geq 0
    \end{aligned}
    \label{eq: non-penetration constraints}.
\end{equation}

Hence, the feasible contact force 
that satisfies the constraints in (\ref{eq: non-penetration constraints}) is equivalent to the solution of the Linear Complementarity Problem (LCP):
\begin{equation}
    \begin{aligned}
        \bm{z} & = 
        \begin{bmatrix}
            \bm{M}^{\top}\bm{K}^o\bm{M} & 
            \begin{array}{l}
                0  \\
                \bm{1}
            \end{array} \\
            \begin{array}{cc}
                \mu & -\bm{1}^{\top}
            \end{array} &
            0
        \end{bmatrix}
        \begin{bmatrix}
            \bm{f} \\
            \lambda
        \end{bmatrix} + 
        \begin{bmatrix}
            -\bm{M}^{\top}\bm{J}_c^s\mathcal{V}^s \\
            0
        \end{bmatrix} \\
        \bm{0} & \preceq \bm{z} \perp 
        \begin{bmatrix}
            \bm{f} \\
            \lambda
        \end{bmatrix}
        \succeq \bm{0}
    \end{aligned}
    \label{eq: LCP problem},
\end{equation}
where $\bm{z}$ is the auxiliary variable, $\bm{M}=[\hat{\bm \alpha},\hat{\bm \beta},-\hat{\bm \beta}]$, and $\bm{K}^o=\bm{J}_c^o\bm{A}{\bm{J}_c^o}^{\top}$ maps the contact force to contact point velocity. The LCP (\ref{eq: LCP problem}) can be efficiently solved with Newton-based methods \cite{Fischer1995ANM}. Then, new poses of the movable obstacle 
$o_k^m \in \mathcal{O}^m$
can be obtained through forward integration:
\begin{equation}
    \bm{x}_k[t_0+\tau] = \bm{x}_k[t_0] + \tau\bm{R}(\theta_k[t_0])\bm{A}{\bm{J}_c^o}^{\top}\bm{M}\bm{f}
    \label{eq: LCP forward integration},
\end{equation}
where
$\bm{R}(\theta_k[t_0]) \in \mathbb{R}^{3 \times 3}$ is the rotation matrix.
Similar conclusions as (\ref{eq: non-penetration constraints})$\sim$(\ref{eq: LCP forward integration}) can be drawn for a different scenario in Fig. \ref{fig: interaction model definition} \textbf{Right}, where $\hat{\bm \alpha}$ points inwards the slider. 

Based on the abovementioned techniques, the lightweight interaction model (\ref{eq: non-penetration constraints})$\sim$(\ref{eq: LCP forward integration}) balances safety and efficiency in push planning.
\vspace{-5pt}
\subsection{Model Predictive Pushing Controller} \label{sec: model predictive pushing controller}
To execute trajectories planned by CA3P, we adopted the Math Programming with Complementarity Constraints (MPCC) formulation in \cite{Moura2021NonprehensilePM} with compensation of external disturbance, which refers to the additional contact force during interaction with movable obstacles.
The disturbance is modeled as a time-varying term in the nominal dynamics
\vspace{-3pt}
\begin{equation}
    \vspace{-3pt}
    \dot{\bm{x}}(t)=\bm{f}(\bm{x}(t),\bm{u}(t))+\hat{\bm{d}}(t)
    \label{eq: time-varying disturbance}.
\end{equation}
The dynamic constraints in MPCC are compensated by the estimated disturbance
\vspace{-3pt}
\begin{equation}
    \vspace{-3pt}
    \bm{x}[k+1]=\bm{x}[k]+\tau_{\text{MPC}}\left[\bm{f}(\bm{x}[k],\bm{u}[k])+\hat{\bm{d}}[k]\right]
    \label{eq: MPCC dynamic constraints disturbance compensation},
\end{equation}
where $\tau_{\text{MPC}}$ is the control time step. The estimated disturbance is updated as
\vspace{-3pt}
\begin{equation}
    \vspace{-3pt}
    \dot{\hat{\bm{d}}}[k]=\kappa_d(\bm{x}_{\text{obs}}[k]-\bm{x}_{\text{mpc}}[k])
    \label{eq: disturbance estimation update law},
\end{equation}
where $\bm{x}_{\text{obs}}[k],\bm{x}_{\text{mpc}}[k]$ are observed and predicted states, respectively, $\kappa_d > 0$ denotes the update rate. The compensation for contact force reinforces closed-loop feedback and avoids modeling the multi-slider control problem with complex nonlinear MPC.
\begin{figure*}[!ht]
    \centering 
        \subfigure[scene 0]{
            \includegraphics[width=1.2in]{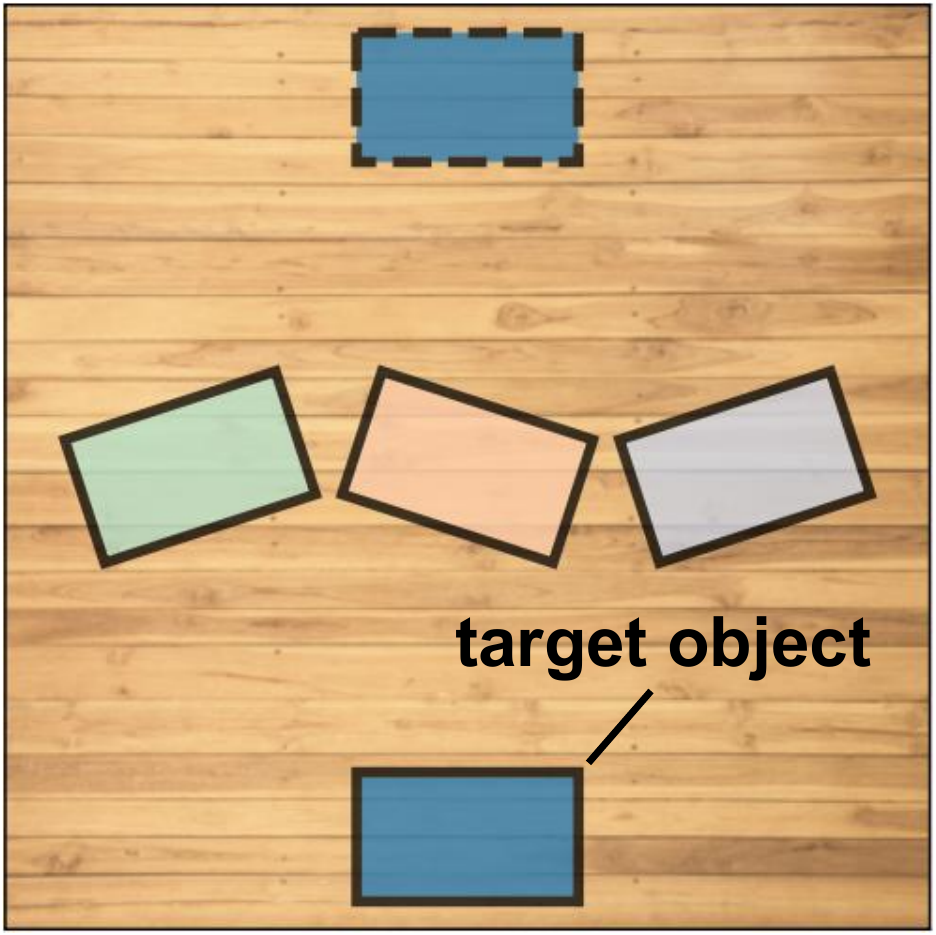}
            \label{fig: exp scene0 visual}
        }
        \subfigure[scene 1]{
            \includegraphics[width=1.2in]{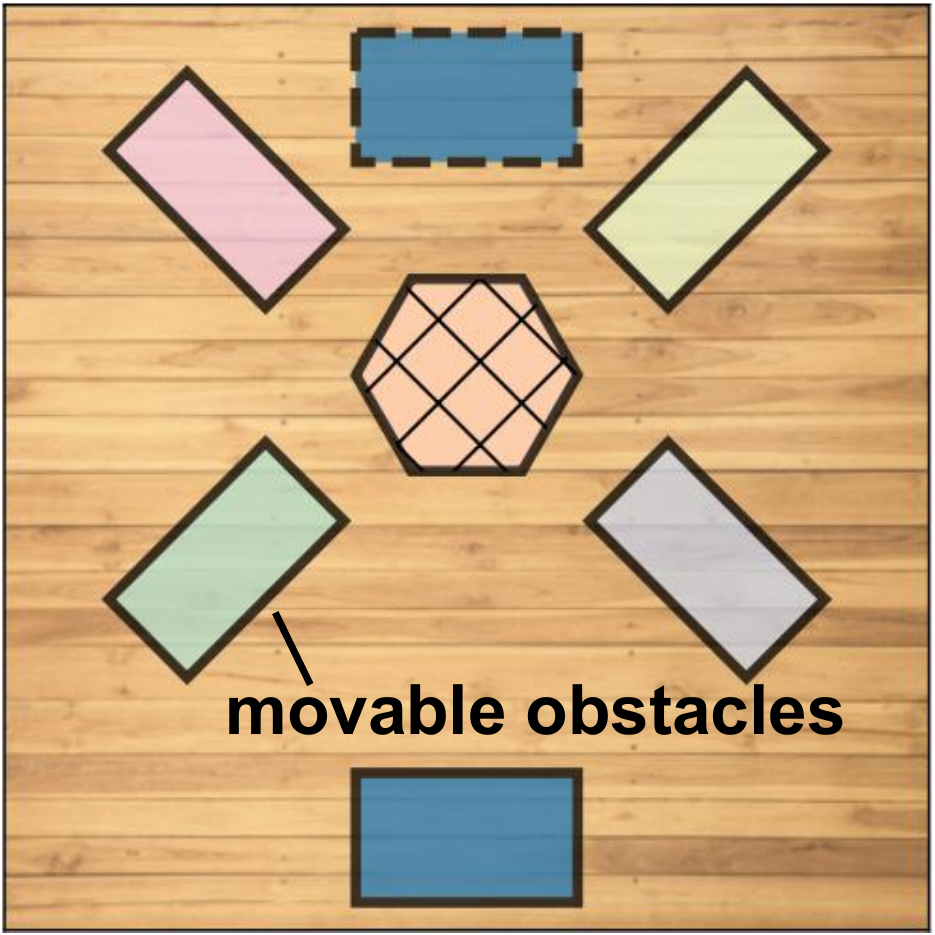}
            \label{fig: exp scene1 visual}
        }
        \subfigure[scene 2]{
            \includegraphics[width=1.2in]{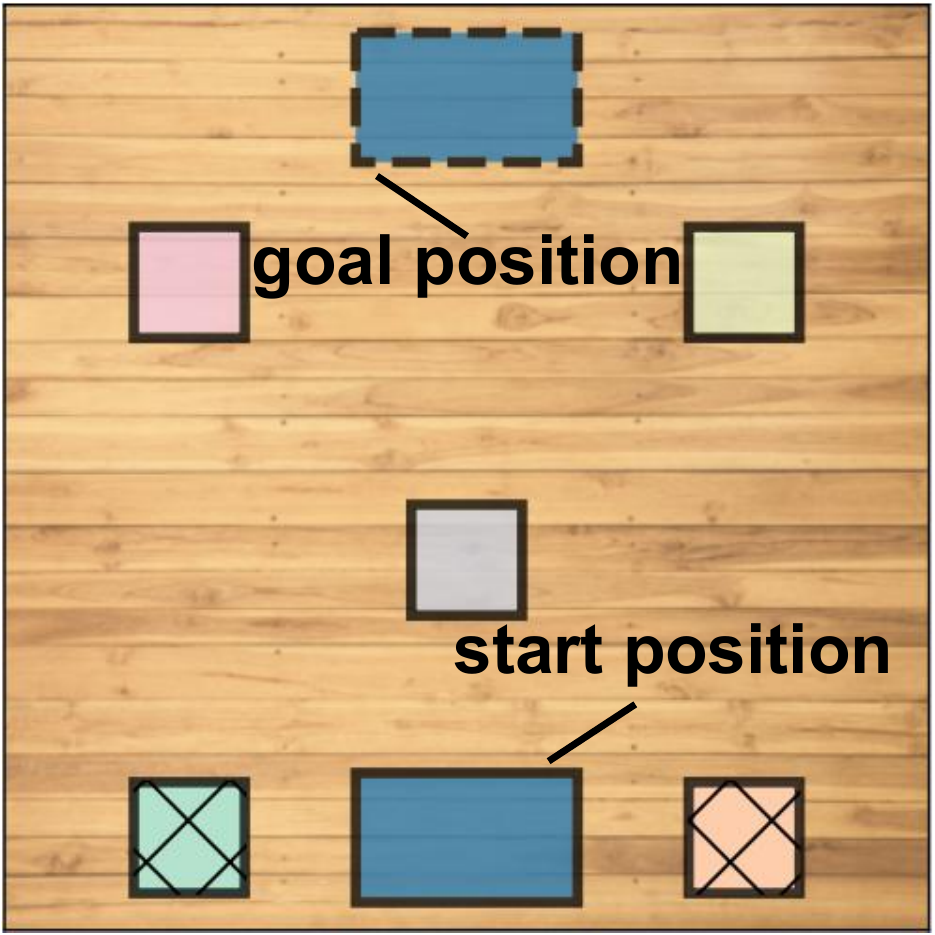} 
            \label{fig:exp scene2 visual}
        }
        \subfigure[scene 3]{
            \includegraphics[width=1.2in]{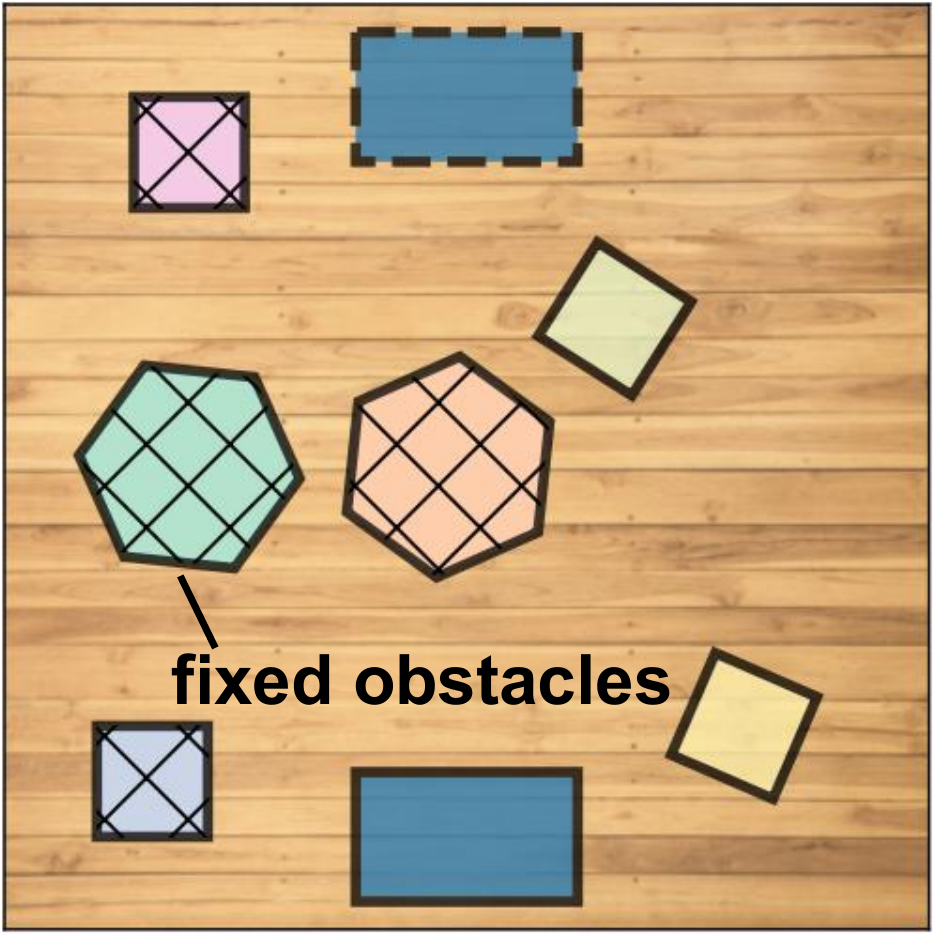}
            \label{fig: exp scene3 visual}
        }
        \subfigure[scene 4]{
            \includegraphics[width=1.2in]{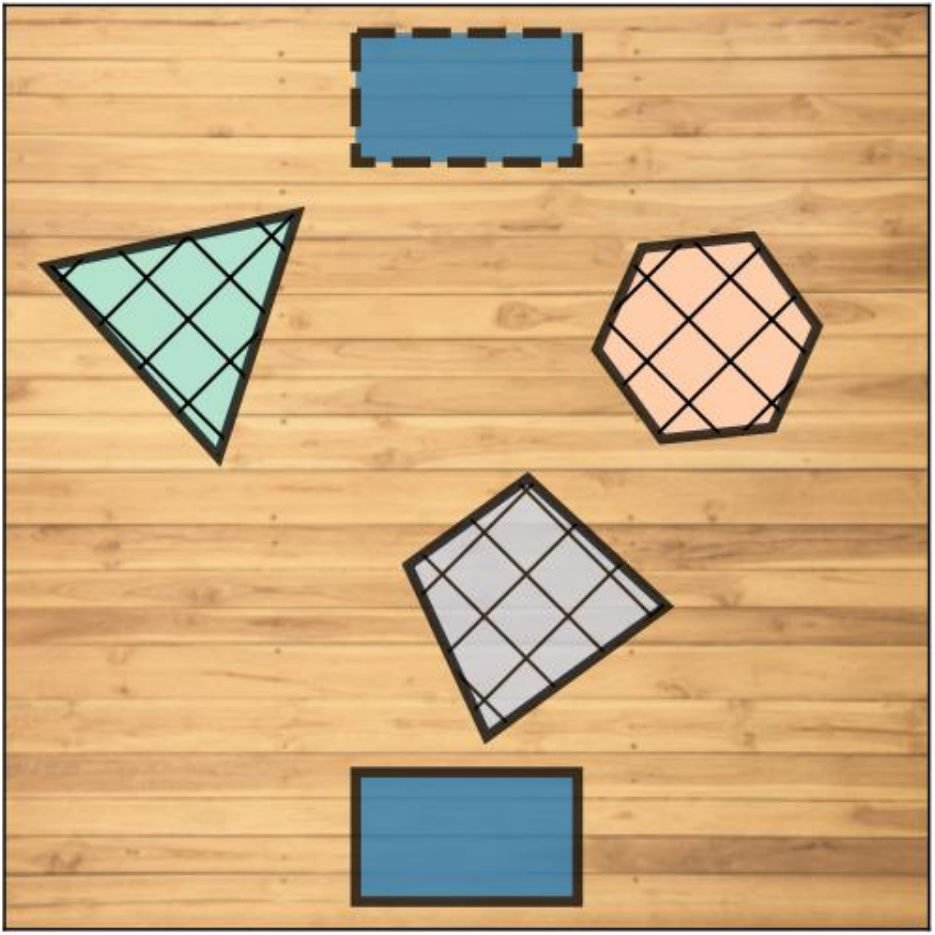}
            \label{fig: exp scene4 visual}
        }
    \vspace{-8pt}
    \caption{Five representative problem instances. Start and goal position of the \textbf{\textcolor[RGB]{31,119,180}{target}} objects are painted with solid and dashed borders, respectively. Fixed obstacles are marked with net mesh.
    The tasks in \textbf{scene 0} and \textbf{scene 1} require the obstacles to be cleared away. The search process for \textbf{scene 2} and \textbf{scene 3} can be accelerated by pushing aside the obstacles, although not essential. \textbf{Scene 4} only needs obstacle avoidance.
    }
    \label{fig: exp five planning scenes}
    \vspace{-15pt}
\end{figure*}
\begin{figure}[t]
\centering
\subfigure{\includegraphics[height=4.75cm]{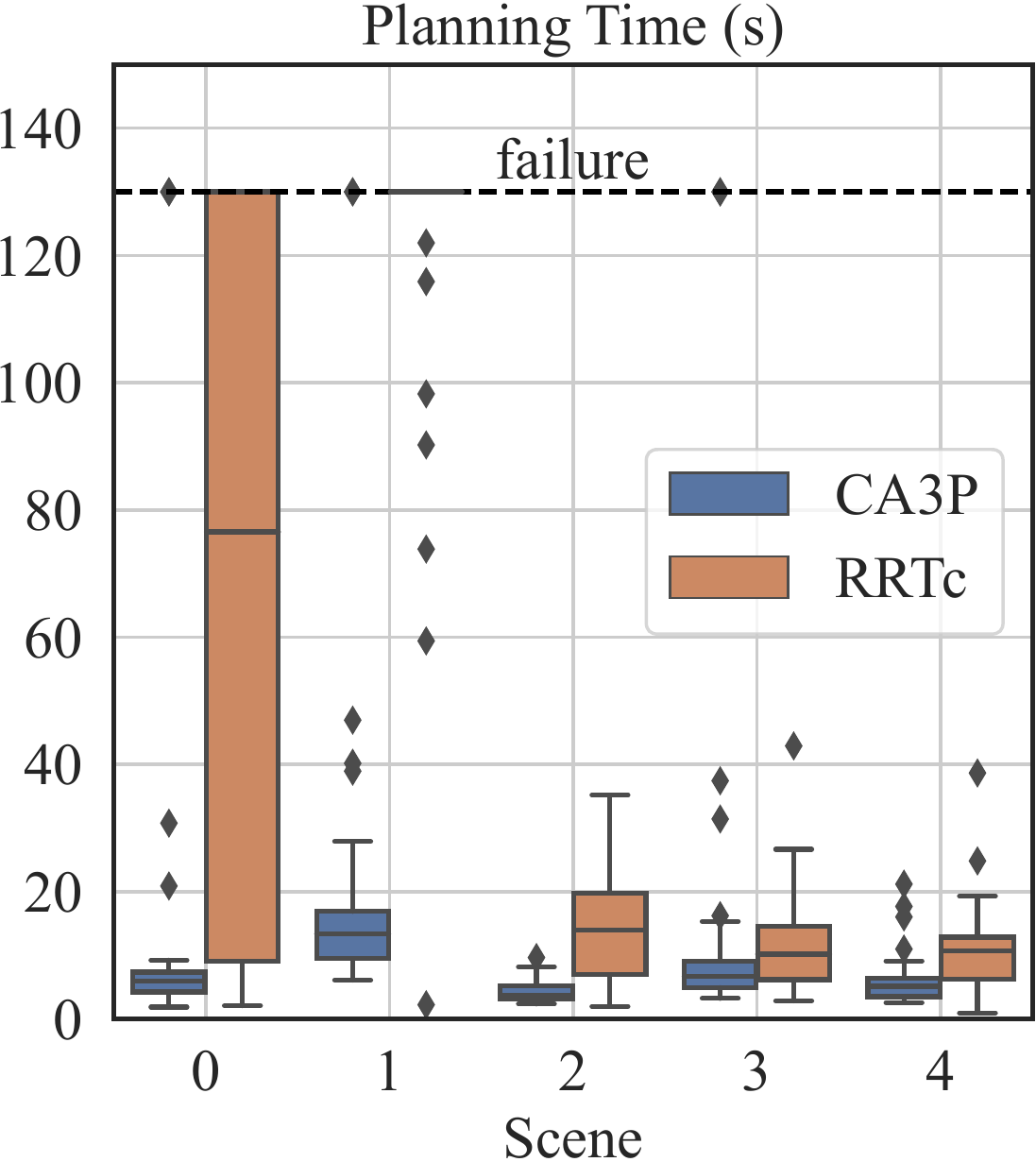}}
\subfigure{\includegraphics[height=4.75cm]{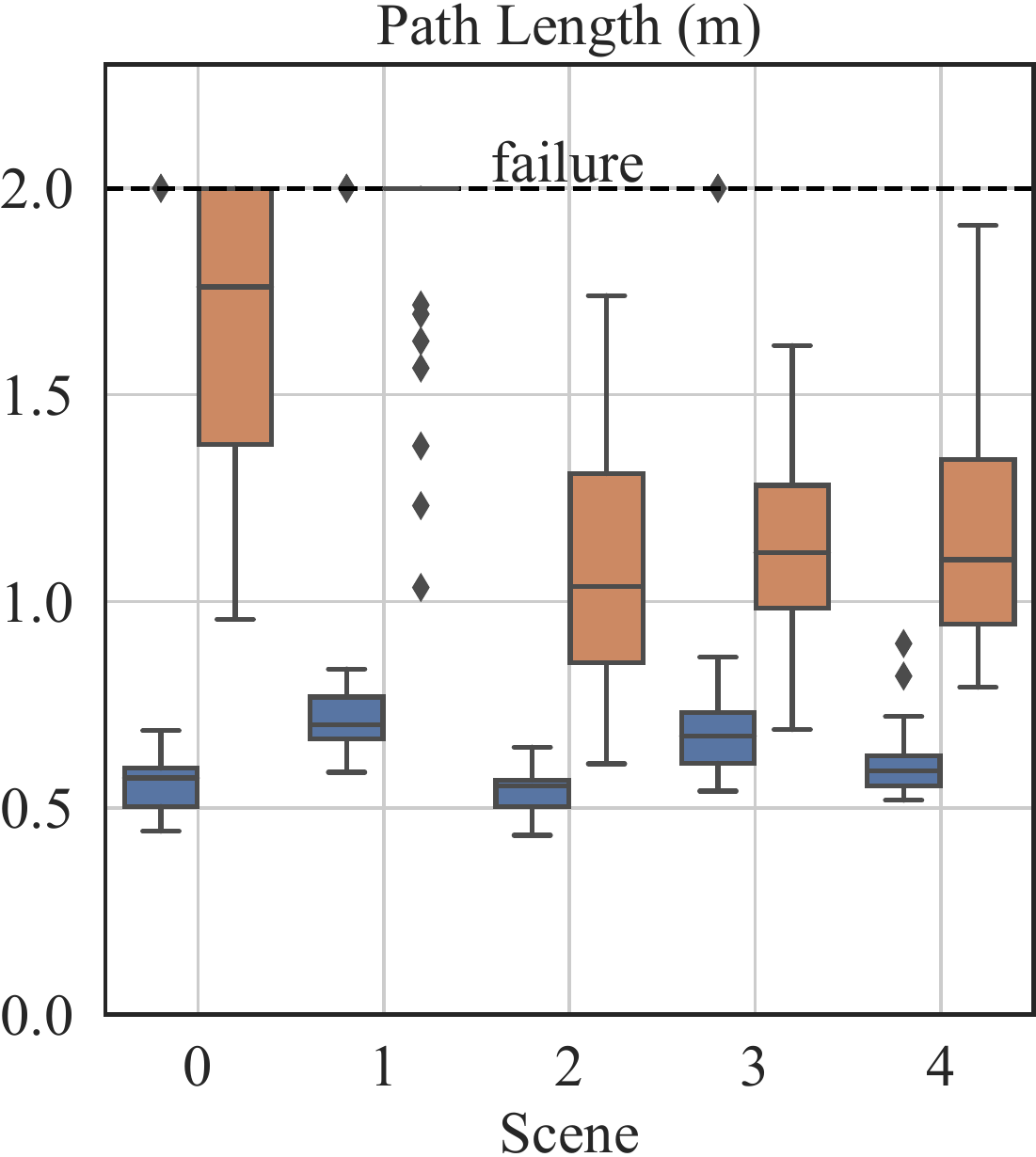}}
\vspace{-18pt}
\caption{Results of CA3P and RRTc on five instances across 30 trials. The planning time and path length are set as \SI{130}{s} and \SI{2.0}{m} for failure trials. The median, lower and upper quartiles, error bars, and outlier values are reported. Less planning time means the method is more efficient. A shorter path length means the result is more optimal.}
\vspace{-23pt}
\label{fig: exp planning time path length}
\end{figure}
%


\section{RESULTS}
In this section, we show that CA3P generates a shorter path in less time compared with baselines, utilizing the contact-aware feature. Moreover, we demonstrate through robot experiments that the proposed algorithm is capable of executing the planned path with an acceptable error, even if the slider is obstructed by obstacles.
%


\vspace{-3pt}
\subsection{Simulation Studies} \label{sec: exp-push planning}
\vspace{-3pt}
The simulations were conducted on a 64-bit Intel Core i7-12700 4.9GHz Ubuntu workstation with 32GB RAM. We used Shapely \cite{Gillies_Shapely_2023} for collision detection and visualization.
We compared CA3P with the following baselines:
\subsubsection{RRTc}
An RRT-based planner utilizing the differential flatness properties proposed in \cite{Zhou2019PushingR}. The slider is manipulated through sticking contacts and forced to follow trajectories with constant curvature, i.e., Dubins path. Whenever a new sample and its nearest neighbor are generated, the corresponding contact point (including the contact face) and force direction are obtained through differential flat mapping. We hypothesize that this method imposes tight constraints on motion planning and could hardly find a nearly optimal path.
%
\subsubsection{MPCC}
The optimization-based scheme proposed in \cite{Moura2021NonprehensilePM}, which uses nonlinear programming with obstacle avoidance constraints. Since this method is difficult to solve with the number of obstacles we considered, we approximated the obstacles by their maximum inscribed circles to make more space. We hypothesize that this method is less flexible when handling obstacles.

%
\begin{table}[]
\caption{Success rate and number of nodes}
\label{tab: exp success rate and node num}
\vspace{-5pt}
\resizebox{\linewidth}{!}{
\begin{tabular}{ccccccc}
\toprule[1.2pt]
\multicolumn{2}{c}{\textbf{Scene}}            & \textbf{0}       & \textbf{1}       & \textbf{2}      & \textbf{3}       & \textbf{4}       \\ 
\midrule[0.8pt]
\multirow{2}{*}{\textbf{Success}}      & CA3P & 28/30   & 28/30   & 30/30  & 28/30   & 30/30   \\ \cline{2-2}
                              & RRTc & 19/30   & 7/30    & 30/30  & 30/30   & 30/30   \\ \cline{1-2}
\multirow{2}{*}{\textbf{$\bullet$ \rm{\textbf{Node In Tree}}}} & CA3P & 263±219 & 544±347 & 169±67 & 286±242 & 210±135 \\ \cline{2-2}
                              & RRTc & 107±117 & 302±160 & 25±14  & 22±12   & 31±21   \\ 
\bottomrule[1.2pt]
\multicolumn{4}{l}{\small $\bullet$ reported as mean±standard deviation.}\\
\end{tabular}}
\vspace{-20pt}
\end{table}

\vspace{4pt}
For CA3P, we set the goal sampling bias as $\SI{0.1}{}$, and set $\bar{f}=\SI{0.15}{N},\bar{\dot{\psi}}_c=\SI{1.0}{rad/s},\mu_p=\SI{0.2}{},\tau=\SI{0.05}{s},\tau_{\text{LQR}}=\SI{0.01}{s}$. For all methods, we define the stopping criterion as maximum planning time $\SI{1e3}{s}$ or maximum number of nodes $\SI{1e3}{}$. A method will return failure if the goal region is unreachable from explored states when the criterion is met. To evaluate the effectiveness of the proposed method, we generated 5 representative problem instances, as shown in Fig. \ref{fig: exp five planning scenes}. The planning time and total path length for each instance across 30 trials are shown in Fig. \ref{fig: exp planning time path length}, and the success rate and the number of nodes are presented in Table. \ref{tab: exp success rate and node num}.
The proposed CA3P greatly reduced planning time and generated shorter trajectories. Moreover, CA3P reports narrower interquartile ranges for all problem instances, indicating that the method achieves more stable performance with random scenes and trials. For more complicated scene 0 and scene 1, CA3P increased the success rate by $30\%$ and $70\%$. 
We observed that RRTc completed all the trials if the search space was enlarged by $40\%$ on each side, at the expense of adding $20\%$ to trajectory length. Nevertheless, such a compromise is impractical due to the constrained environment and limited workspace of the manipulator. Results of scene 2 and scene 3 showed that actively removing obstacles is an effective way to obtain consistent path length; since avoiding obstacles is pretty demanding in the sampling sequence and quality. Results of scene 4 proved that the reachable set offers preferable directions in the search process.
Whatever we did to adjust the parameters and manually decide the contact face, the MPCC baseline failed to solve all the problem instances, either due to collision or the large distance towards the goal. The result is mainly because the controllability of the planar slider is significantly restricted if it is not allowed to switch contact faces.
\begin{figure*}[tbp]
    \centering 
    \subfigcapskip=0pt
        \subfigure[$t=$\SI{1.03}{s}]{
            \includegraphics[height=1.06in]{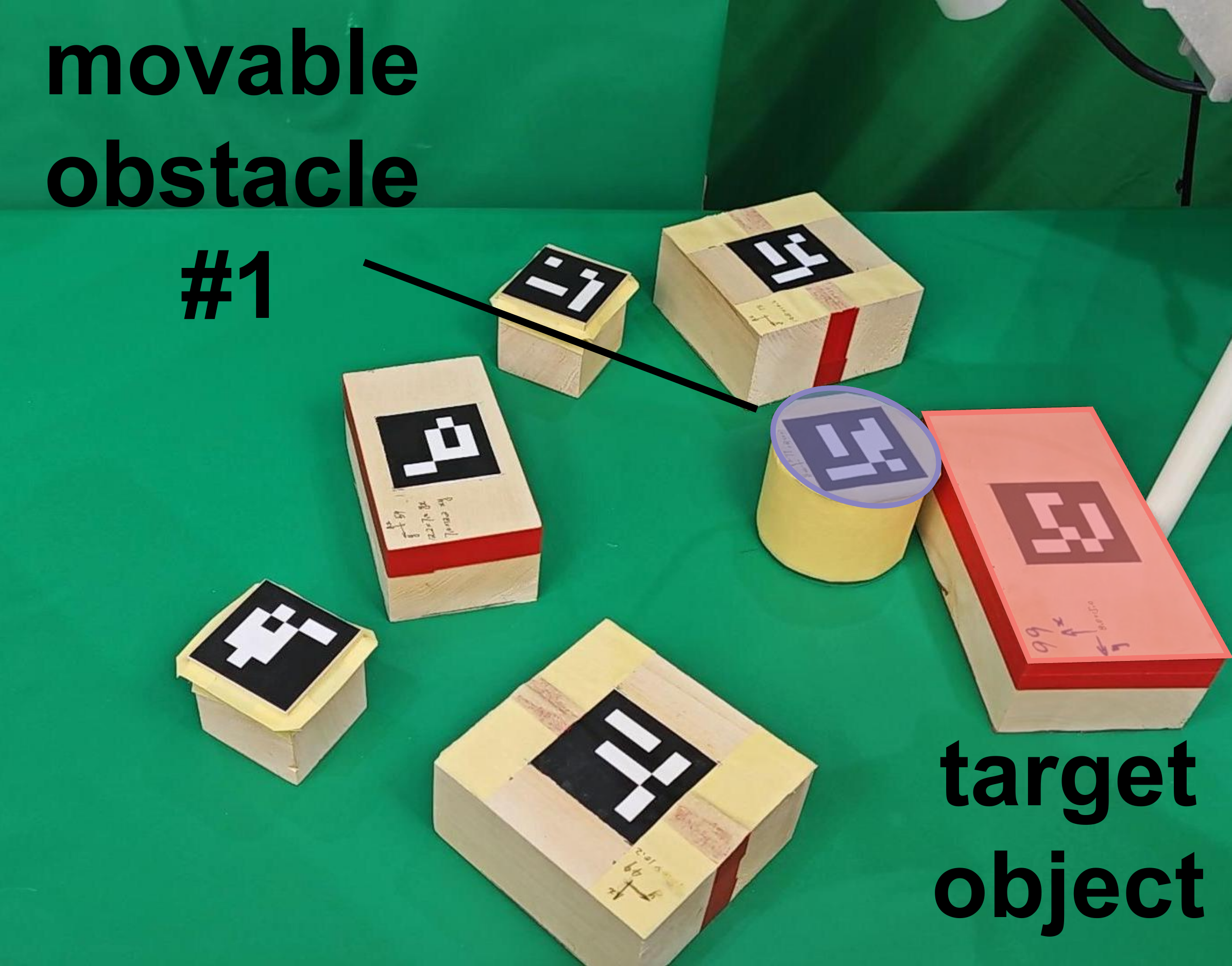}
            \label{fig: exp retrieval frame 0}
        }\hspace{-3.3mm}
        \subfigure[$t=$\SI{12.84}{s}]{
            \includegraphics[height=1.06in]{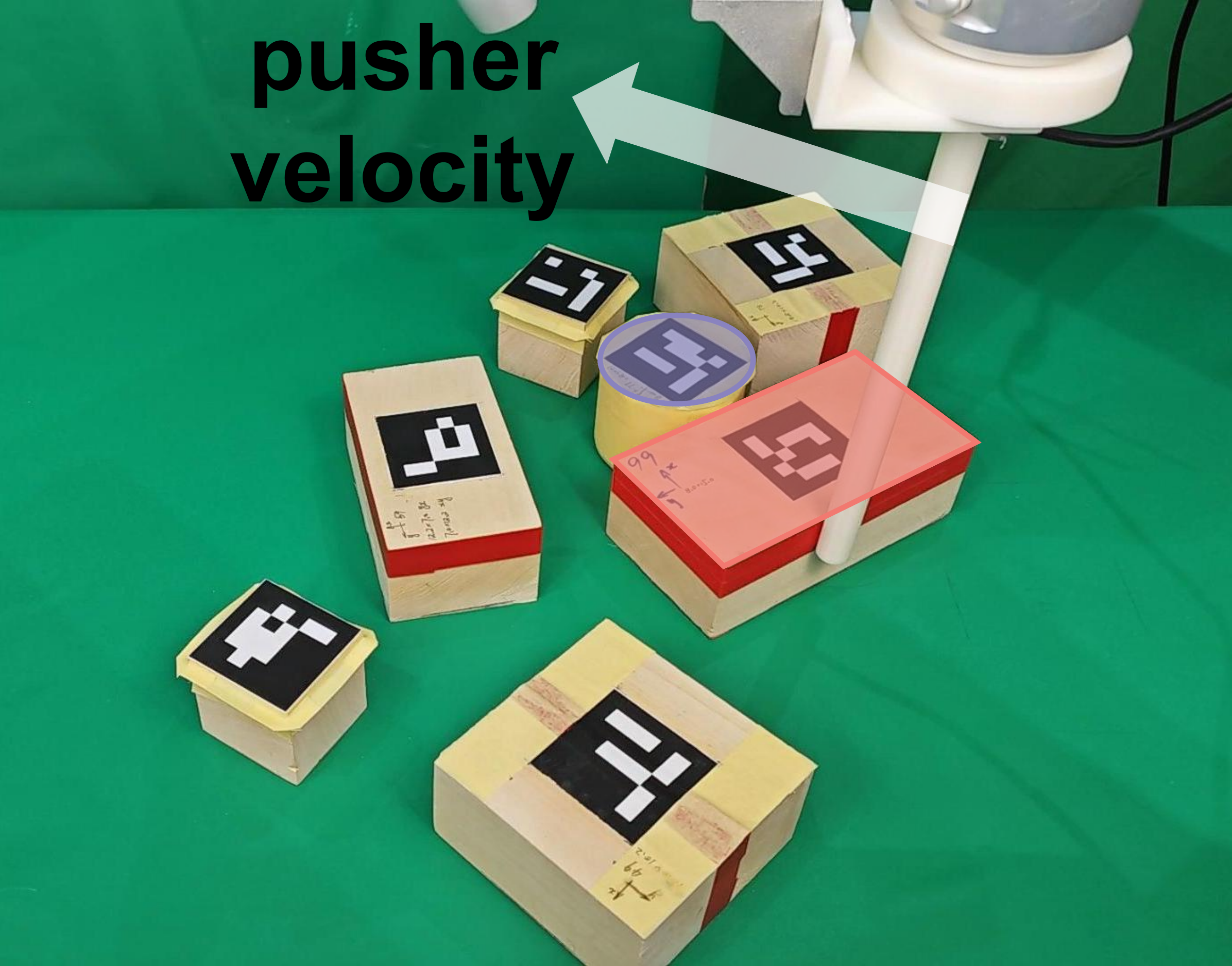}
            \label{fig: exp retrieval frame 1}
        }\hspace{-3.3mm}
        \subfigure[$t=$\SI{22.49}{s}]{
            \includegraphics[height=1.06in]{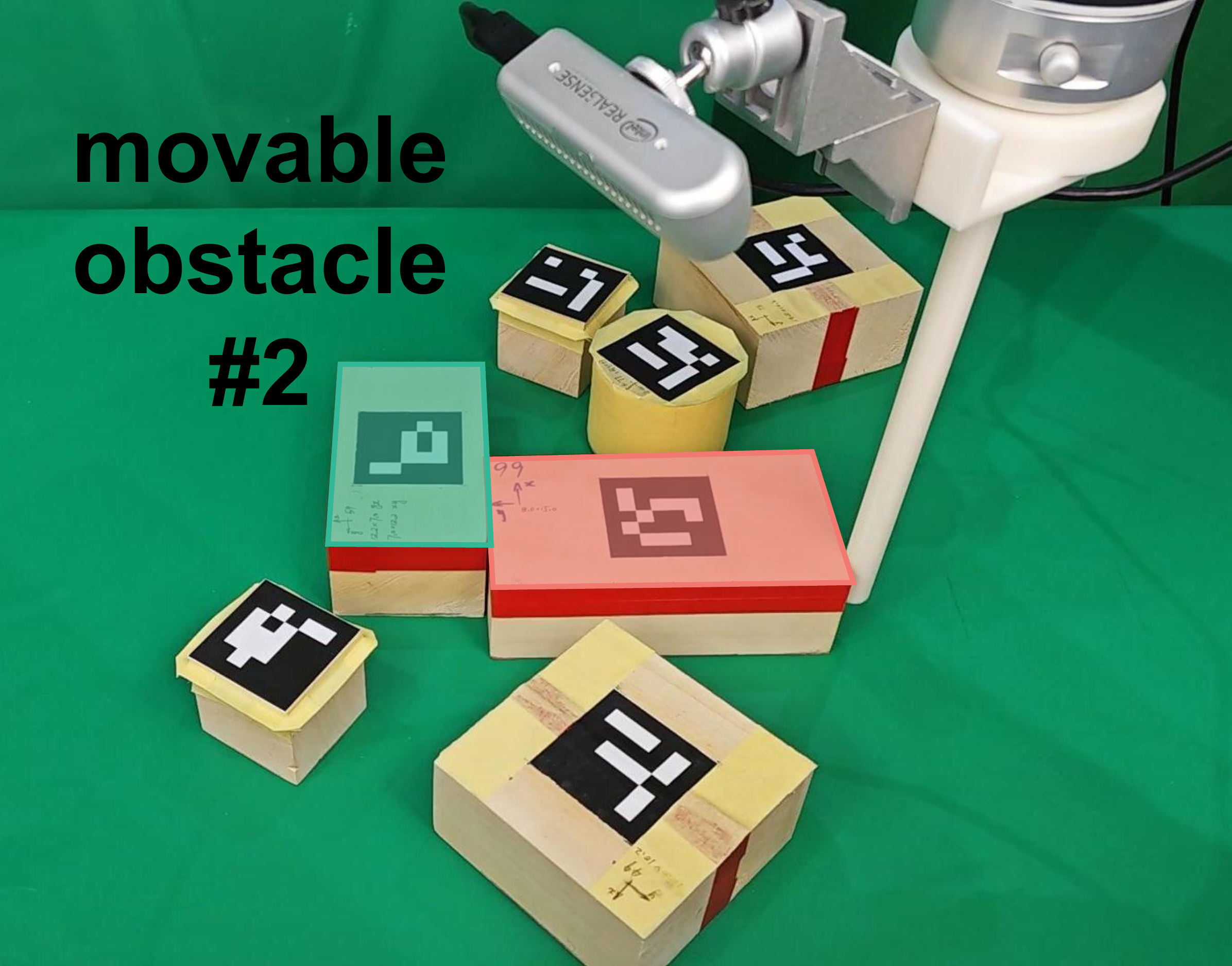} 
            \label{fig: exp retrieval frame 2}
        }\hspace{-3.3mm}
        \subfigure[$t=$\SI{39.36}{s}]{
            \includegraphics[height=1.06in]{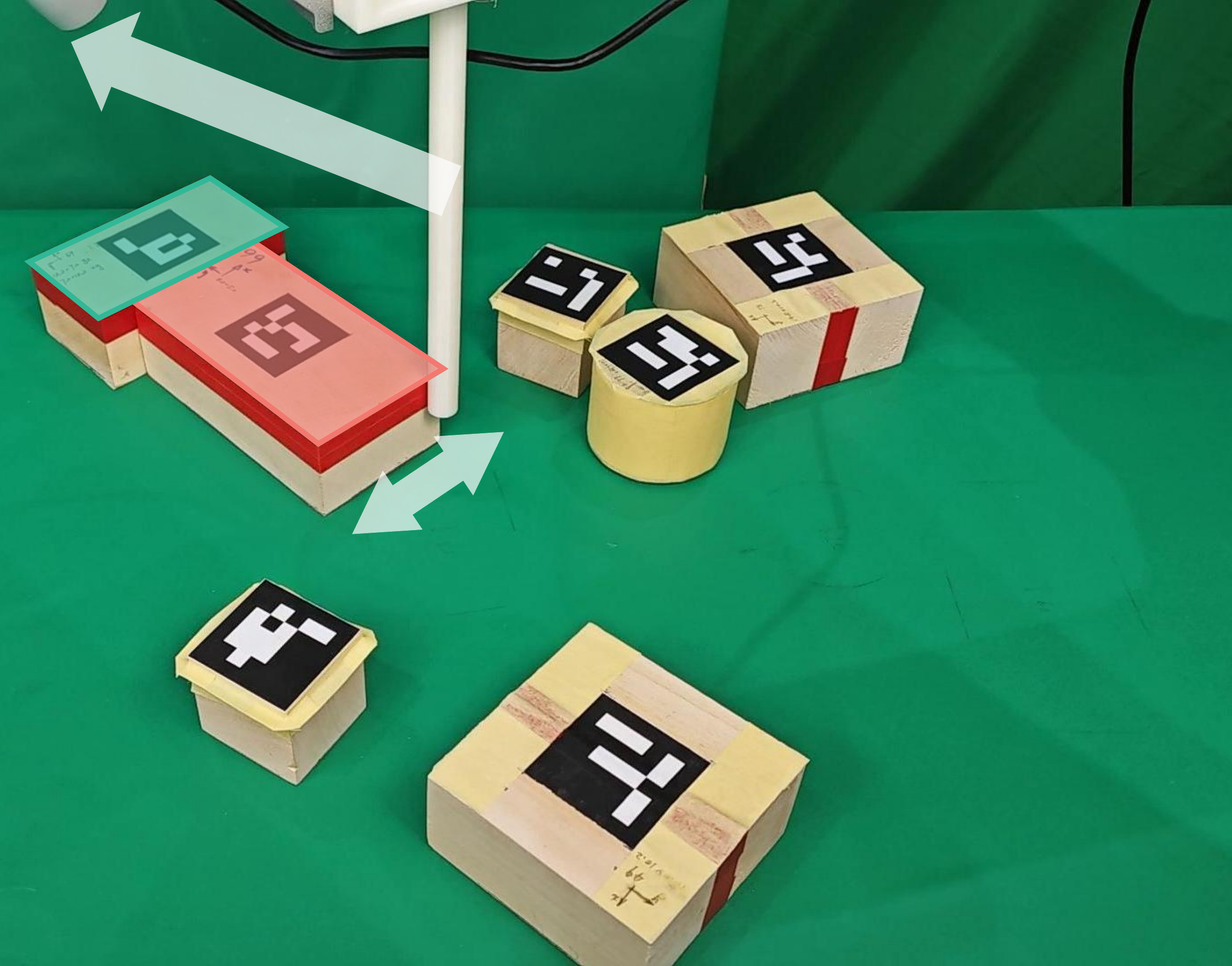}
            \label{fig: exp retrieval frame 3}
        }\hspace{-3.3mm}
        \subfigure[$t=$\SI{50.57}{s}]{
            \includegraphics[height=1.06in]{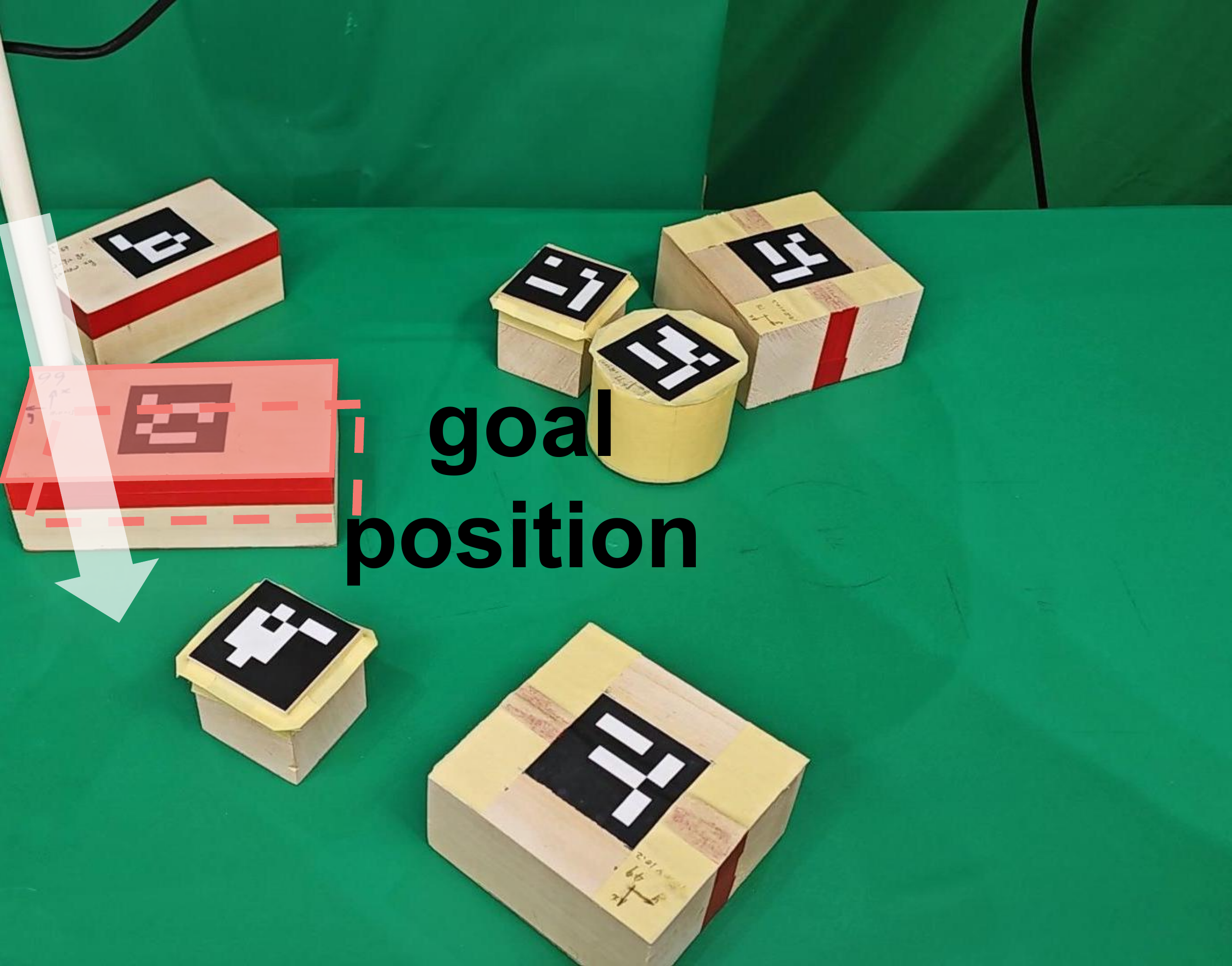}
            \label{fig: exp retrieval frame 4}
        }
    \vspace{-5pt}
    \caption{
        Snapshots of the planar object retrieval task executed on a UR5 robot arm. The slider was manipulated to consecutively push aside a cylindrical (a-b) and a cubic obstacle (c-d) and was finally pushed to the goal position after switching contact faces. The time consumed on switching faces is ignored.
    }
    \label{fig: exp complete retrieval task}
    \vspace{-11pt}
\end{figure*}
\begin{figure}[tbp]
    \centering
    \vspace{0pt}
    \subfigtopskip=0pt
    \subfigcapskip=-5pt
    \subfigure[]{
        \includegraphics[height=3.83cm]{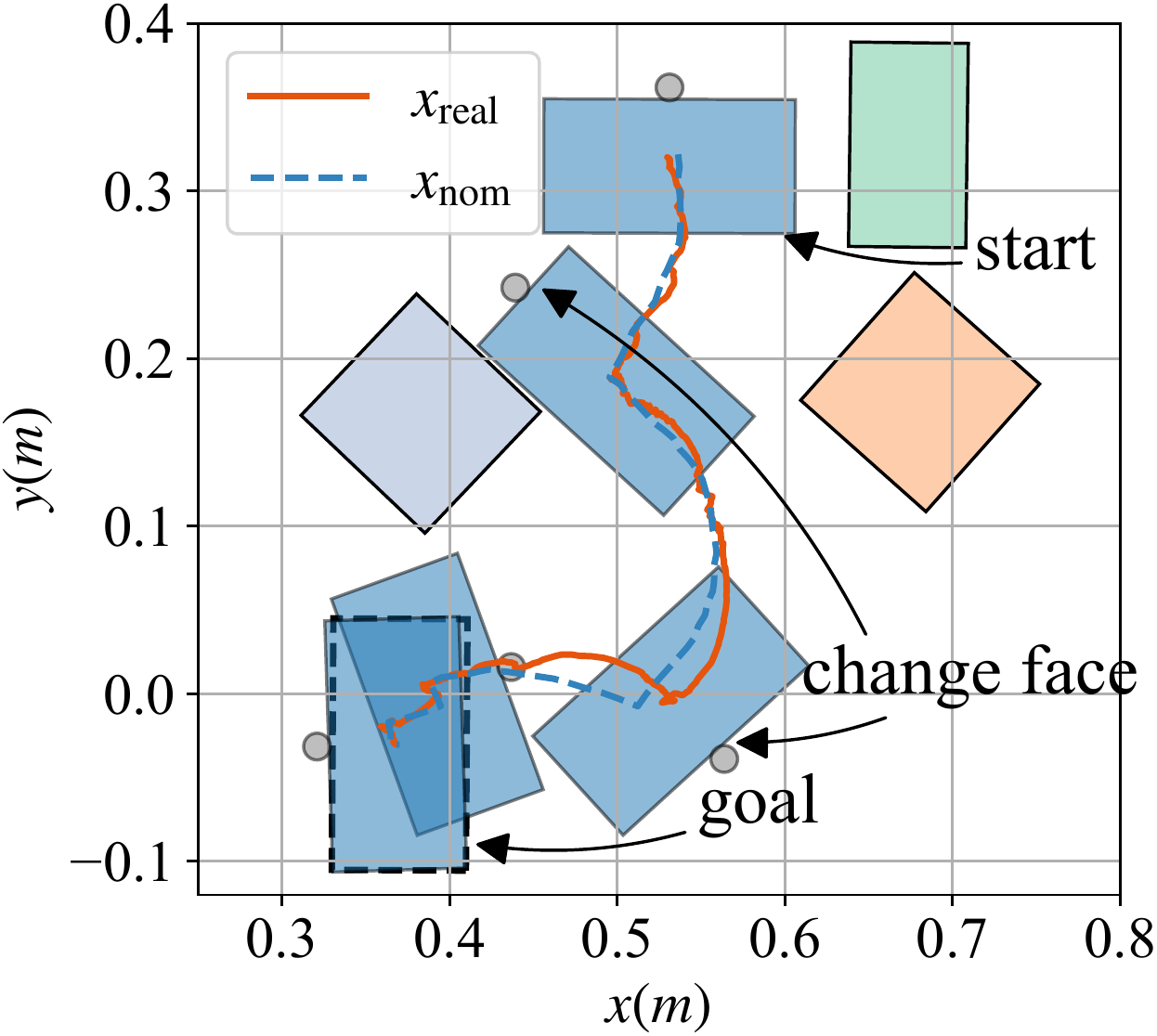}
        \label{fig: exp avoidance trajectory}
    }
    \hspace{-17.0pt}
    \subfigure[]{
        \includegraphics[height=3.83cm]{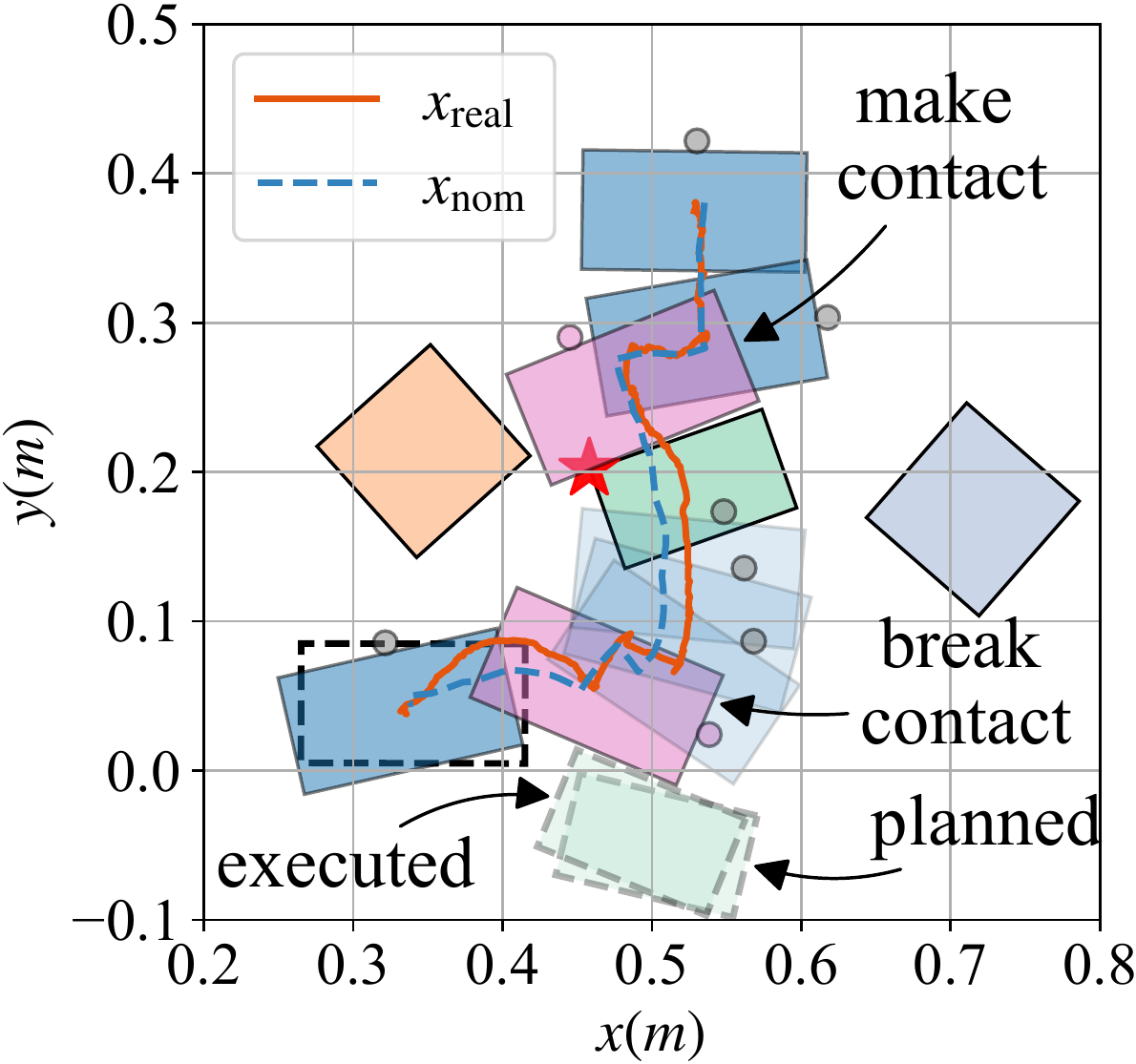}
        \label{fig: exp pushaway traj}
    }
    \vspace{-6pt}
    \caption{Planned (nominal) and executed trajectory of the planar slider. Several keyframes before switching faces are depicted with the round pusher.
    (a) Obstacle avoidance. (b) Removal of the \textbf{\textcolor[RGB]{127,201,127}{green}} object through pushing.
    }
    \label{fig: exp track and pushaway traj}
    \vspace{-16pt}
\end{figure}
%

\subsection{Real-World Experiments}  \label{sec: exp-robot experiment}
We implemented the robot experiments on a 64-bit Intel Core i7-9700 4.7GHz Ubuntu workstation with 16GB RAM. We mounted a $\Phi\SI{15}{}\times\SI{250}{mm}$ resin pusher on a UR5 robot. The perception system was composed of an Intel Realsense D435i camera and several ArUco markers.
The effectiveness of the proposed method was demonstrated through an obstacle avoidance task; and a task where the removal of obstacles is required. In both tasks, the planar slider was $\SI{8.0}{}\times\SI{15.0}{}\times\SI{5.0}{cm^3}$ in size, with estimated $\mu_p=0.1$ and measured frictional force $\SI{1.2}{N}$. The movable obstacle was a $\SI{7.0}{}\times\SI{12.2}{}\times\SI{5.0}{cm^3}$ cube. Finally, we tested the algorithm in a planar object retrieval task to validate the possibility of generalizing the method to obstacles of other geometric shapes, i.e., cylinders, and to validate the robustness when consecutively pushing away obstacles is needed. In all the experiments, the MPC prediction horizon was 30 steps, and we set $\Bar{f}=\SI{0.5}{N},\bar{\dot{\psi}}_c=\SI{3.0}{rad/s},\tau_{\text{MPC}}=\SI{0.04}{s}$, and set $\bar{\psi}_c=\SI{0.52}{},\SI{0.9}{rad}$ for the short and long edge of the slider, respectively.
\begin{figure}[t]
    \centering
    \includegraphics[width=1.0\columnwidth]{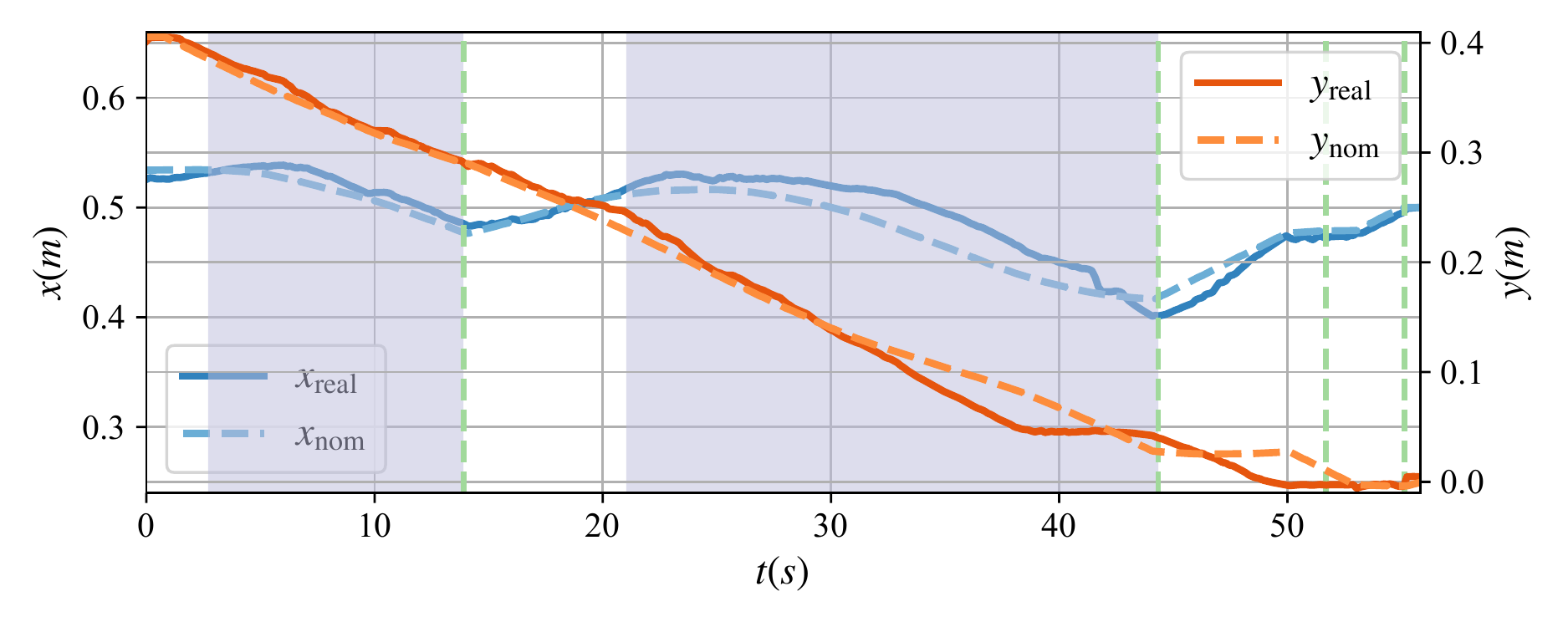}
    \vspace{-20pt}
    \caption{
        Tracking error of the object retrieval task. Green dashed lines mark the moments of switching faces; purple shadows report the intervals of contact. Blue and orange curves represent the x and y dimensions, respectively.
    }
    \label{fig: exp retrieval track error}
    \vspace{-20pt}
\end{figure}
The planned path, pusher, and slider trajectories in the obstacle avoidance task are presented in Fig. \ref{fig: exp avoidance trajectory}.
 It is hard for the slider to move through the obstacles in its initial pose because the interspace is narrower than the side length. Hence, the slider steered to have a short edge ahead. Note that the task requires sharp turning at times, which relies on the face-switching technique to improve controllability. There were no redundant movements of the slider except for near the target, which can be further improved
 by replacing the goal pose with the goal region.
As shown in Fig. \ref{fig: exp pushaway traj}, the slider is obstructed by one movable obstacle and two square objects fixed on either side. To make space and simultaneously approach the goal, the slider contacted, pushed, and detached from the obstacle in succession. Despite the inaccurate modeling and randomness of frictional contacts, there is not much difference in the final positions of the movable obstacle between execution and planning. Moreover, the disturbance rejection property of MPC allowed the system to recover from moderate tracking errors. As seen in Fig. \ref{fig: exp pushaway traj}, the pusher moved towards the edge of the contact face to increase the moment of the exerted force. Then, the slider successfully reached the target position with a small error; since the tracking error remained bounded in continuous contact and rapidly converged once the disturbance vanished.
Finally, Fig. \ref{fig: exp complete retrieval task} depicts five keyframes of a complete planar object retrieval task. The initial and goal positions of the planar slider are $\bm{x}^s[0]=\left[\SI{0.53}{m},\SI{0.41}{m},\SI{-1.57}{rad}\right]^{\top}$ and $\bm{x}^s[T]=\left[\SI{0.50}{m},\SI{0.00}{m},\SI{-3.14}{rad}\right]^{\top}$, respectively. The task scenerio contains fixed obstacles of $\SI{10.0}{}\times\SI{10.2}{}\times\SI{5.0}{cm^3}$ and $\SI{5.0}{}\times\SI{5.0}{}\times\SI{5.0}{cm^3}$, the additional cylindrical movable obstacle is of $\Phi\SI{7}{}\times\SI{6}{cm^3}$. Obstacles are simplified as their minimum bounding rectangles in CA3P, with an estimated frictional coefficient $\mu=\SI{0.3}{}$ between all pairs. Since it is challenging to control contact forces directly, we converted the control input $\bm{u}^p$ to speed command. The pusher is initialized at the center of each contact face the robot has switched to. Other parameters remain unchanged compared to the two previous experiments. As shown in Fig. \ref{fig: exp retrieval frame 0} and Fig. \ref{fig: exp retrieval frame 1}, the slider pushed the cylindrical object aside to enlarge the space ahead; instead of passively performing a time-consuming avoidance behavior. Later the slider passed through the narrow corridor and came into contact with another obstacle, as Fig. \ref{fig: exp retrieval frame 2} depicts. We observed the fast-moving behavior of the pusher on the slider's periphery as an anti-disturbance mechanism (Fig. \ref{fig: exp retrieval frame 3}). The slider eventually broke out of the clutter in $\SI{50.6}{s}$ (Fig. \ref{fig: exp retrieval frame 4}). The tracking error in the x and y directions are reported in Fig. \ref{fig: exp retrieval track error}.

\vspace{-5pt}
\section{CONCLUSIONS}

This work proposes a new manipulation method for non-prehensile planar pushing in a constrained workspace. We combine sampling-based approaches with a simplified object interaction model for motion planning and apply the MPC scheme for robust control. 
With the use of those techniques together, the proposed method is with the novel contact-aware feature, 
which allows the robot to actively avoid obstacles, switch contacts, or remove obstacles simultaneously. Multiple actions are integrated into the planning algorithm of CA3P, and its effectiveness has been comprehensively validated in the task of object retrieval, subject to several challenges (e.g., densely cluttered environments, uncertain physical parameters, and 
complex kinodynamic constraints). 
%
%
Future works will be devoted to improving the quality of motion planning with trajectory optimization; and to taking account of higher-order dynamics for preferable dynamic non-prehensile manipulation.
%






{\small
\bibliographystyle{IEEEtran}
\bibliography{ref}
}


\end{document}